\documentclass [twocolumn] {svjour3}	%

\usepackage[utf8]{inputenc} %
\usepackage{xcolor} %
\usepackage{newtxmath} %
\usepackage{amsmath} %
\usepackage{dsfont}
\usepackage[numbers]{natbib} %
\usepackage{hyperref} %
\usepackage{graphicx}
\usepackage{algorithm,algpseudocode} %
\usepackage{tikz}
\usepackage[export]{adjustbox}
\usepackage{subcaption}
\usepackage{calculator}
\usepackage{xspace}
\usepackage{overpic} %
\usepackage{siunitx}

\title{Ground Metric Learning on Graphs}

\author{Matthieu Heitz 	\and
	Nicolas Bonneel \and
	David Coeurjolly \and
	Marco Cuturi \and
	Gabriel Peyré
}

\institute{
	Matthieu Heitz
	\at {Université de Lyon, CNRS/LIRIS, France}
	\\\email{matthieu.heitz@univ-lyon1.fr}
	\and
	Nicolas Bonneel
	\at {Université de Lyon, CNRS/LIRIS, France}
	\and
	David Coeurjolly
	\at {Université de Lyon, CNRS/LIRIS, France}
	\and
	Marco Cuturi
	\at {Google Research, Brain team, France}
	\and 
	Gabriel Peyré
	\at {CNRS and ENS, PSL University, France}
}

\newcommand{\eqdef}{\stackrel{\text{\tiny def.}}{=}} %
\newcommand{\mat}[1]{\mathbf{#1}}
\newcommand{\sclr}[1]{#1}			%
\newcommand{\norm}[1]{\left|\left|#1\right|\right|}

\newcommand*{\ie}{i.e.\@\xspace}
\makeatletter
\newcommand*{\etc}{%
    \@ifnextchar{.}%
        {etc}%
        {etc.\@\xspace}%
}
\makeatother

\DeclareMathOperator*{\Id}{\mat{Id}}
\DeclareMathOperator*{\argmin}{argmin}
\DeclareMathOperator*{\KL}{KL}
\DeclareMathOperator*{\diag}{diag}

\newcommand{\highlightchanges}{0}
\newcommand{\new}[1]{\ifnum \highlightchanges = 1 {\color{blue}#1} \else #1 \fi}

\renewcommand{\vec}[1]{\mat{#1}}

\DeclareMathAlphabet{\mathcal}{OMS}{cmsy}{m}{n}

\begin{document}
\maketitle

\sloppy

\begin{abstract}
Optimal transport (OT) distances between probability distributions are parameterized by the ground metric they use between observations. 
Their relevance for real-life applications strongly hinges on whether that ground metric parameter is suitably chosen. The challenge of selecting it adaptively and algorithmically from prior knowledge, the so-called ground metric learning (GML) problem, has therefore appeared in various settings.
In this paper, we consider the GML problem when the learned metric is constrained to be a \emph{geodesic distance} on a \emph{graph} that supports the measures of interest.
This imposes a rich structure for candidate metrics, but also enables far more efficient learning procedures when compared to a direct optimization over the space of all metric matrices.
We use this setting to tackle an inverse problem stemming from the observation of a density evolving with time; we seek a graph ground metric such that the OT interpolation between the starting and ending densities that result from that ground metric agrees with the observed evolution.
This OT dynamic framework is relevant to model natural phenomena exhibiting displacements of mass, such as the evolution of the color palette induced by the modification of lighting and materials.

\keywords{Optimal transport $\cdot$ Metric learning $\cdot$ Displacement interpolation}

\end{abstract}

\section{Introduction}

Optimal transport (OT) is a powerful tool to compare probability measures supported on geometric domains (such as Euclidean spaces, surfaces or graphs).
The value provided by OT lies in its ability to leverage prior knowledge on the proximity of two isolated observations to quantify the discrepancy between two probability distributions of such observations.
This prior knowledge is usually encoded as a ``ground metric'' \cite{rubner_earth_2000}, which defines the cost of moving mass between points.

The Wasserstein distance between histograms, densities or point clouds, all seen here as particular instances of probability measures, is defined as the smallest cost required to transport one measure to another.
Because this distance is geodesic when the ground metric is geodesic, OT can also be used to compute interpolations between two probability measures, namely a path in the probability simplex that connects these two measures as end-points.
This interpolation is usually referred to as a \emph{displacement} interpolation \cite{mccann_convexity_1997}, describing a series of intermediate measures during the transport process.

When two discrete probability distributions are supported on a Euclidean space, and the ground metric is itself the Euclidean distance (the most widely used setting in applications), theory tells us that the displacement interpolation between these two measures only involves particles moving along straight lines, from one point in the starting measure to another in the end measure.
Imagine that, on the contrary, we observe a time series of measures in which mass displacements do not seem to match that hypothesis.
In that case the ground metric inducing such mass displacements must be of a different nature.
We cast in that case the following inverse problem: under which ground metric could this observed mass displacement be considered optimal?
The goal of our approach here is precisely to answer that question.
We give an illustrative example in \autoref{fig:main}, where we show that we search for a ground metric that deforms the space such that the sequence of mass displacements that is observed is close to a Wasserstein geodesic with that ground metric.
\begin{figure}
	\centering
	\includegraphics[width=\linewidth]{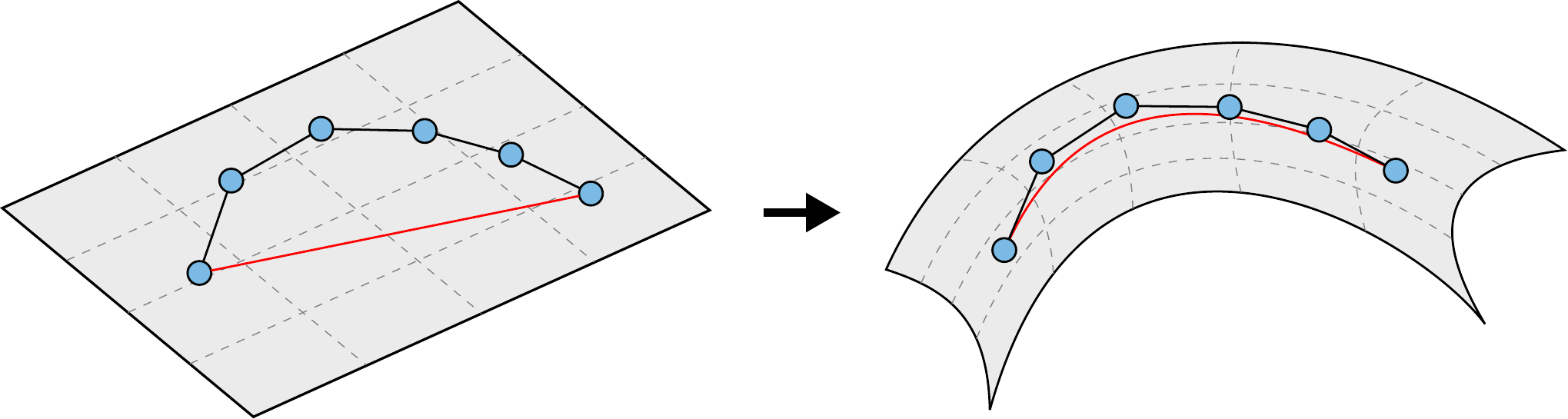}
	\caption{Left: before metric learning, the sequence of observed histograms (blue points) lies in the Wasserstein space of probability distributions with a Euclidean ground metric.
	The observed sequence does not match the Wasserstein geodesic (red line) between the first and last element.
	Right: after modifying the ground metric iteratively, the Wasserstein space is now deformed in such a way that the geodesic between the first and last element in this new geodesic space (red curve) is as close as possible to the sequence.}
	\label{fig:main}
\end{figure}

The main choice in our approach relies on looking at (anisotropic) diffusion-based geodesic distances \cite{yang_geodesic_2016} as the space of candidate ground metrics.
We then minimize the reconstruction error between measures that are observed at intermediary time stamps and interpolated histograms with that ground metric.
The problem we tackle is challenging in terms of time and memory complexity, due to repeated calls to solve Wasserstein barycenter problems with a non-Euclidean metric.
We address these issues using a sparse resolution of an anisotropic diffusion equation, yielding a tractable algorithm.
The optimization is performed using a quasi-Newton solver and automatic differentiation to compute the Jacobians of Wasserstein barycenters, here computed with entropic regularization and through a direct differentiation of Sinkhorn iterations~\cite{bonneel_wasserstein_2016,genevay_learning_2017}.
Because an automatic differentation of this entire pipeline would suffer from a prohibitive memory footprint, we also propose closed-form gradient formulas for the diffusion process.
We validate our algorithm on synthetic datasets, and on the learning of color variations in image sequences.
Finally, a Python implementation of our method as well as some datasets used in this paper are available online
\footnote{\href{https://github.com/matthieuheitz/2020-JMIV-ground-metric-learning-graphs}{https://github.com/matthieuheitz/2020-JMIV-ground-metric-learning-graphs}}.

\paragraph{\textbf{Contributions}}

\begin{itemize}
\renewcommand\labelitemi{$\bullet$}
	\item We introduce a new framework to learn the ground metric of optimal transport, where it is restricted to be a geodesic distance on a graph. The metric is parameterized as weights on the graph's edges, and geodesic distances are computed through anisotropic diffusion.

	\item We estimate this metric by fitting measures that are intermediate snapshots of a dynamical evolution of mass as Wasserstein barycenters.

	\item We provide a tractable algorithm based on the sparse discretization of the diffusion equation and efficient automatic differentiation. %

\end{itemize}

\section{Related Works}

\subsection{Computational Optimal transport}
\label{subsec:compote}

Solving OT problems has remained intractable for many years because doing so relies on solving a bipartite minimum-cost flow problem, with a number of variables that is quadratic with regard to the histograms' size.
Fortunately, in the past decade, methods to approximate OT distances using various types of regularizations have become widespread.
\citet{cuturi_sinkhorn_2013} introduced an entropic regularization of the problem, which allows the efficient approximation of the Wasserstein distance, using an iterative scaling method called the Sinkhorn algorithm.
This algorithm is very simple as it only performs point-wise operations on vectors, and matrix-vector multiplications that involve a kernel, defined as the exponential of minus the ground metric, inversely scaled by the regularization strength.
\citet{cuturi_fast_2014} then extended this method to compute Wasserstein barycenters, a concept which was previously introduced in \cite{agueh_barycenters_2011}.
\citet{benamou_iterative_2015} later linked this iterative scheme to Bregman projections, and showed that it can be adapted to solve various OT related problems such as partial, multi-marginal, or capacity-constrained OT.
This regularization allows the computation of OT for large problems, such as those arising in machine learning \cite{courty_optimal_2016,frogner_learning_2015,genevay_learning_2017} or computer graphics \cite{bonneel_wasserstein_2016,schmitz_wasserstein_2018,solomon_convolutional_2015}.

Recently, \citet{altschuler_massively_2018} introduced a method to accelerate the Sinkhorn algorithm via low-rank (Nyström) approximations of the kernel~\cite{altschuler_massively_2018}.
Simultaneously, there have been considerable efforts to study the convergence and approximation properties of the Sinkhorn algorithm~\cite{altschuler_near-linear_2017} and its variants \cite{dvurechensky_computational_2018}.

Other families of numerical methods are based on variational or PDE formulations of the problem \cite{papadakis_optimal_2014,angenent_minimizing_2003}, or semi-discrete formulations \cite{levy_numerical_2015}.
We refer to \cite{peyre_computational_2018} and \cite{santambrogio_optimal_2015} for extensive surveys on computational optimal transport.

The entropic regularization scheme has helped to tackle inverse problems that involve OT, since it converts the original Wasserstein distance into a fast, smooth, differentiable, and more robust loss.
Although differentiating the Wasserstein loss has been extensively covered, differentiating quantities that build upon it, such as smooth Wasserstein barycenters, is less common.
A few examples are Wasserstein barycentric coordinates~\cite{bonneel_wasserstein_2016}, Wasserstein dictionary learning on images~\cite{schmitz_wasserstein_2018} and graphs~\cite{simou_graph_2018}, and model ensembling~\cite{dognin_wasserstein_2019}.

\subsection{Metric Learning}

In machine learning, metric learning is the task of inferring a metric on a domain using side information, such as examplar points that should be close or far away from each other.
The assumption behind such methods is that metrics are chosen within parameterized families, and tailored for a task and data at hand, rather than selected among a few handpicked candidates.
Metric learning algorithms are supervised, often learning from similarity and dissimilarity constraints between pairs of samples ($x_i$ should be close to $x_j$), or triplets ($x_i$ is closer to $x_j$ than to $x_k$).
Metric learning has applications in different tasks, such as classification, image retrieval, or clustering.
For instance, for classification purposes, the learned metric brings closer samples of the same class and drives away samples of different classes~\cite{xing_distance_2003}.

Metric learning methods are either linear or non-linear, depending on the formulation of the metric with respect to its inputs.
We will briefly recall various metric learning approaches, but refer the reader to existing surveys~\cite{kulis_metric_2013,bellet_metric_2015}.
A widely-used linear metric function is the squared Mahalanobis distance, which is employed in the popular Large Margin Nearest Neighbors algorithm (LMNN) \cite{weinberger_distance_2006} along with a $k$-NN approach.
Other linear methods \cite{chechik_online_2009,pele_interpolated_2016} choose not to satisfy all distance axioms (unlike the Mahalanobis distance) for more flexibility and because they are not essential to agree with human perception of similarities~\cite{bellet_metric_2015}.
Non-linear methods include the prior embedding of the data (kernel trick) before performing a linear method~\cite{torresani_large_2007,wang_metric_2011}, or other non-linear metric functions~\cite{chopra_learning_2005,kedem_non-linear_2012}.
Facing problems where the data samples are histograms, researchers have developed metric learning methods based on distances that are better suited for histograms such as $\chi^2$ \cite{kedem_non-linear_2012,yang_chi-squared_2015} or the Wasserstein distance, which we describe in more detail.

\subsection{Ground Metric Learning}

The Wasserstein distance relies heavily---one could almost say exclusively---on the ground metric to define a geometry on probability distributions. Setting that parameter is therefore crucial, and being able to rely on an adaptive, data-based procedure to select it is attractive from an applied perspective. 
The ground metric learning (GML) problem, following the terminology set forth by \citet{cuturi_ground_2014}, considers the generic case in which a ground cost that is a true metric (definite, symmetric and satisfying triangle inequalities) is learned using supervised information from a set of histograms. This method requires projecting matrices onto the cone of metric matrices, which is known to require a cubic effort in the size of these matrices~\cite{brickell_metric_2008}.
\citet{wang_supervised_2012} follow GML's approach but drop the requirement that the learned cost must be a metric. %
\citet{zen_simultaneous_2014} use GML to enhance previous results on Non-negative Matrix Decomposition with a Wasserstein loss (EMD-NMF) \cite{sandler_nonnegative_2011}, by alternatively learning the matrix decomposition and the ground metric.
\new{
Learning a metric from the observation of a matching is a well-studied problem.
\citet{dupuy_estimating_2016} learn a similarity matrix from the observation of a fixed transport plan, and use this to propose factors explaining weddings across groups in populations.
\citet{stuart_inverse_2019} infer graph-based cost functions similar to ours, but learn from noisy observations of transport plans, in a Bayesian framework.
Finally, \citet{li_learning_2019} extend the method of \citet{dupuy_estimating_2016} for noisy and incomplete matchings, by relaxing the marginal constraint using the regularized Wasserstein distance itself (instead of the $\KL$ divergence commonly used in unbalanced optimal transport \cite{chizat_scaling_2016}).
}
\citet{huang_supervised_2016} consider a non-discrete GML problem that involves point-clouds, and propose to learn a Mahalanobis metric between word embeddings that agree with labels between texts, seen here as bags of words. Both of these approaches use the entropic regularization of Wasserstein distances (see next section).
More recently, \citet{xu_multi-level_2018} combined several previous ideas to create a new metric learning algorithm.
It is a regularized Wasserstein distance-flavored LMNN scheme, with
a Mahalanobis distance as ground metric, parameterized by multiple local metrics \cite{weinberger_fast_2008} and a global one.

Similarly to the above works, our method aims to learn the ground metric of OT distances, but differs in the formulation of the ground metric.
We search for metrics that are geodesic distances on graphs, via a diffusion equation.
Our method also differs in the data we learn from: the observations that are fed to our algorithm are snapshots of a mass movement, and not pair or triplet constraints.
We use displacement interpolations to reconstruct that movement, hence our objective function contains multiple inverse problems involving OT distances.
This contrasts with simpler formulations where the objective function or the constraints are weighted sums of OT distances \cite{cuturi_ground_2014,wang_supervised_2012,xu_multi-level_2018}.
Furthermore, the interest in these previous works was generally to perform supervised classification, which is something we do not aim to do in this paper.
Nevertheless, our learning algorithm is supervised, since we provide the exact timestamps of each sample in the sequence.

Our method is similar to the work of \citet{zen_simultaneous_2014}, in the sense that we both aim to reconstruct the input data given a model. However they only use OT distances as loss functions to compare inputs with linear reconstructions, while we use OT distances to synthesize the reconstructions themselves.
The difference between our method and those of \citet{dupuy_estimating_2016,stuart_inverse_2019,li_learning_2019} lies in the available observations: they learn from a fixed matching (transport plan) whereas we learn from a sequence of mass displacements from which we infer both the metric and an optimal transport plan.
Our method thus does not require identifying information on traveling masses.
The fact that this identification is not required ranks among the most important and beneficial contributions of the OT geometry to data sciences, notably biology~\cite{schiebinger_optimal-transport_2019}.

Our approach to metric learning corresponds to setting up an optimization problem over the space of geodesic distances on graphs, which is closely related to the continuous problem of optimizing Riemannian metrics.
Optimizing metrics from functionals involving geodesic distances has been considered in~\cite{benmansour_derivatives_2010}.
This has recently been improved in~\cite{mirebeau_automatic_2017} using automatic differentiation to compute the gradient of the functional involved, which is also the approach we take in our work.
This type of metric optimization problem has also been studied within the OT framework (see for instance~\cite{buttazzo_optimal_2004}), but these works are only concerned with convex problems (typically maximization of geodesic or OT distances), while our metric learning problem is highly non-convex.

\section{Context} %
\label{sec:context}

Optimal transport defines geometric distances between probability distributions.
In this paper, we consider discrete measures on graphs.
These measures are sums of weighted Dirac distributions supported on the graph's vertices: $\mu = \sum_{i=1}^{N} u_i \delta_{x_i}$, with the weight vector $\vec{u}=(u_i)$ in the probability simplex $\Sigma_N  \eqdef \left\lbrace \vec{u} \in \mathbb{R}^N_+ \big| \sum_{i=1}^{N}\vec{u}_i = 1\right\rbrace$, and $x_i$ the position of vertex $i$ in an abstract space.
In the following, we will refer to the weight vector of these measures as ``histograms''.

A transport plan between two histograms $\vec{a},\vec{b} \in \Sigma_N$ is a matrix $\mat{P} \in \mathbb{R}^{N\times N}_+$, where $\mat{P}_{i,j}$ gives the amount of mass to be transported from vertex $i$ of $\vec{a}$ to vertex $j$ of $\vec{b}$.
We define the transport polytope of $\vec{a}$ and $\vec{b}$ as
\begin{align*}
U(\vec{a},\vec{b}) \eqdef \Bigl\lbrace \mat{P} \in \mathbb{R}^{N\times N}_+ ~\mid~ \mat{P}\mathds{1}_N = \vec{a} ~~\textnormal {and}~~ \mat{P}^T\mathds{1}_N = \vec{b} \Bigr\rbrace.
\end{align*}

\paragraph{The Kantorovich problem.}

Optimal transport aims to find the transport plan $\mat{P}$ that minimizes a total cost, which is the mass transported multiplied by its cost of transportation.
This is called the Kantorovich problem and it is written
\begin{align}\label{eq:defkantoprob}
\sclr{W}_\mat{C}(\vec{a},\vec{b}) \eqdef \min_{\mat{P} \in U(\vec{a},\vec{b})} \sum_{i,j} \mat{C}_{i,j}\mat{P}_{i,j}.
\end{align}
The cost matrix $\mat{C} \in \mathbb{R}^{N\times N}_+$ defines the cost $\mat{C}_{i,j}$ of transporting one unit of mass from vertex $i$ to $j$.
If the cost matrix is $\mat{C}_{i,j} = d(x_i,x_j)^p$, with $d$ a distance on the domain, then $\sclr{W}_\mat{C}^{1/p}$ is a distance between probability distributions, called the $p$-Wasserstein distance \cite[Proposition 2.2]{peyre_computational_2018}.

\paragraph{Entropy regularization.}

This optimization problem can be regularized, and a computationally efficient way to do so is to balance the transportation cost with the entropy $H$ of the transport plan \cite{cuturi_sinkhorn_2013}.
The resulting entropy-regularized problem is written as
\begin{align}\label{eq:defregwassdist}
\sclr{W}^\varepsilon_\mat{C}(\vec{a},\vec{b}) \eqdef \min_{\mat{P} \in U(\vec{a},\vec{b})} \left< \mat{C},\mat{P}\right> - \varepsilon H(\mat{P}),
\end{align}
where $H(\mat{P}) \eqdef -\sum_{i,j} \mat{P}_{i,j}(\log(\mat{P}_{i,j}) -1)$ and $\varepsilon>0$.
The addition of this regularization term modifies how the Kantorovich problem can be addressed.
Without regularization, the problem must be solved with network flow solvers, whereas with regularization it can be conveniently solved using the Sinkhorn algorithm, which is less costly than minimum cost network flow algorithms for sufficiently large values of $\varepsilon$ (the smaller the regularization, the slower the convergence).
The obtained value $\sclr{W}^\varepsilon_\mat{C}$ is an approximation of the exact Wasserstein distance, and the approximation error can be controlled with $\varepsilon$.
Another chief advantage of this regularization is that $\sclr{W}^\varepsilon_\mat{C}(\vec{a},\vec{b})$ defines a smooth function of both its inputs $(\vec{a},\vec{b})$ and the metric $C$.
This property is important to be able to derive efficient and stable metric learning schemes, as we seek in this paper.

\paragraph{Displacement interpolation.}

Given two histograms $\vec{r}_0$ and $\vec{r}_1$, their barycentric interpolation is defined as the curve parameterized for $t \in [0,1]$ as
\begin{align}\label{eq:dispinterpprob}
	\vec{\gamma}_\mat{C}(\vec{r}_0,\vec{r}_1,t) \eqdef \argmin_{\vec{r}\in\Sigma_N} ~ (1-t)W_\mat{C}(\vec{r}_0,\vec{r}) + tW_\mat{C}(\vec{r},\vec{r}_1).
\end{align}
This class of problem was introduced and studied by~\cite{agueh_barycenters_2011}.
In the case where $d$ is a geodesic distance and $p=2$, then $W_\mat{C}(\vec{r}_0,\vec{r}_1)$ not only represents the total cost of transporting $\vec{r}_0$ to $\vec{r}_1$, but also the square of the length of the shortest path (a geodesic) between them in the Wasserstein space.
In this case,~\eqref{eq:dispinterpprob} defines the so-called \emph{displacement interpolation}~\cite{mccann_convexity_1997}, which is also the geodesic from $\vec{r}_0$ to $\vec{r}_1$.

With a slight abuse of notation, in the following we call $\vec{\gamma}_\mat{C}$ the displacement interpolation path, for any generic cost $C$.
In practice, we approximate this interpolation using the regularized Wasserstein distance $\sclr{W}^\varepsilon_\mat{C}$, which means we can compute it as a special case of regularized Wasserstein barycenter \cite{benamou_iterative_2015} between two histograms.
We denote $\vec{\gamma}_\mat{C}^\varepsilon$ the resulting smoothed approximation.

\section{Method}
\label{sec:method}

\subsection{Metric parametrization}\label{subsec:metric_param}

Since histograms are supported on a graph, we parameterize the ground metric by a positive weight $w_{i,j}$ associated to each edge connecting vertices $i$ and $j$.
This should be understood as being inversely proportional to the length of the edge, and conveys how easily mass can travel through it.
Additionally, we set $w_{i,j}=0$ when vertices $i$ and $j$ are not connected.

We aim to carry out metric learning using OT where the ground cost is the square of the geodesic distance associated to the weighted graph.
Instead of optimizing a full adjacency matrix $\mat{W} = (w_{i,j})_{i,j}$, which has many zero entries that we do not wish to optimize, we define the vector $\vec{w} \in \mathbb{R}^\sclr{K}$ as the concatenation of all metric parameters $w_{i,j} > 0$, that is, those for which vertices $i$ and $j$ are connected.
This imposes a fixed connectivity on the graph.

\subsection{Problem statement}\label{subsec:prob_statement}

Let $(\vec{h}_i)_{i=1}^\sclr{P} \in \Sigma_N$ be observations at $\sclr{P}$ consecutive time steps of a movement of mass.
We aim to retrieve the metric weights $\vec{w}$ for which an OT displacement interpolation approximates best this mass evolution.
This corresponds to an OT regression scheme parameterized by the metric, and leads to the following optimization problem
\begin{align}\label{eq:mainpbmdisc}
\min_{\vec{w}}~\sum_{i=1}^{\sclr{P}} \mathcal{L}\left(\vec{\gamma}_{{\mat{C}}_{\vec{w}}}^\varepsilon \left( \vec{h}_1,\vec{h}_{\sclr{P}},t_i\right), \vec{h}_i\right) + f(\vec{w}),
\end{align}
where:
\begin{itemize}
	\item $t_i$ are the timestamps: $t_i = (i-1)/(\sclr{P}-1)$,
	\item $\mathcal{L}$ is a loss function between histograms,
	\item $\mat{C}_{\vec{w}}$ is the ground metric matrix associated to the graph weights $\vec{w}$, detailed in section~\ref{subsec:computegeodist},
	\item and $f(\vec{w})$ is a regularization term detailed in section~\ref{subsec:regul}.
\end{itemize}
Note that we chose equally spaced timestamps for the sake of simplicity, but the method is applicable to any sequence of timestamps.

In the following sections, we detail the different components of our algorithm.
Our objective function \eqref{eq:mainpbmdisc} is non-convex, and we minimize it with an L-BFGS quasi-Newton algorithm, to compute a local minimum of the non-convex energy.
The L-BFGS algorithm requires the evaluation of the energy function, as well as its gradient with respect to the inputs.
In our case, evaluating the energy function \eqref{eq:mainpbmdisc} requires reconstructing the sequence of input histograms using a displacement interpolation between the first and last histogram, and assessing the quality of the reconstructions.
The gradient is calculated through automatic differentiation, which provides high flexibility when adjusting the framework.

In our numerical examples, we consider 2-D and \nobreakdash{3-D} datasets discretized on uniform square grids, so that the graph is simply the graph of 4 or 6 nearest neighbors on this grid.

\subsection{Kernel application}\label{subsec:recinputs}

As mentioned previously, we use entropy-regularized OT to compute displacement interpolations \eqref{eq:dispinterpprob}.
These interpolations can be computed via the Sinkhorn barycenter algorithm \ref{alg:sinkbary}, and for which the main computational burden is to apply the kernel matrix $\mat{K}$ on $\mathbb{R}^{N}$ vectors. %
\new{
When the domain is a grid and the metric is Euclidean, applying that kernel boils down to a simple convolution with a Gaussian kernel. %
For an arbitrary metric as in our case, computing the kernel $\mat{K}$ requires all-pairs geodesic distances on the graph.
This can be achieved using e.g. Dijkstra's algorithm or the Floyd-Warshall algorithm.
However, this kernel is a non-smooth operator, which is quite difficult to differentiate with respect to the metric weights (see for instance~\cite{benmansour_derivatives_2010,mirebeau_automatic_2017} for works in this direction).
In sharp contrast, our approach leverages Varadhan's formula~\cite{varadhan_behavior_1967}: we approximate the geodesic kernel $\mat{K}$ with the heat kernel.
This kernel is itself approximated by solving the diffusion equation using $\sclr{S}$ sub-steps of an implicit Euler scheme.
This has the consequence of 1) having faster evaluations of the kernel and 2) smoothing the dependency between the distance kernel and the metric.
This last property is necessary to define a differentiable functional, which is key for an efficient solver.
This has been applied to OT computation by~\citet{solomon_convolutional_2015}, inspired from the work of~\citet{crane_geodesics_2013}.
}

\makeatletter
\renewcommand{\ALG@beginalgorithmic}{\large}
\makeatother
\begin{algorithm}[H]
	\caption{Sinkhorn barycenter \cite{benamou_iterative_2015}}
	\label{alg:sinkbary}
	\begin{flushleft}
		\normalsize
		\textbf{Input:} histograms $(\vec{a}_r) \in (\Sigma_N)^R$, weights $\vec{\lambda} \in \Sigma_R$, kernel $\mat{K}$, number of iterations $\sclr{L}$ \\
		\textbf{Ouput:} barycenter $\vec{b}$
	\end{flushleft}
	\begin{algorithmic}
		\State $\forall r, \vec{v}_r = \vec{1}_{N}$
		\For {$l = 1$ to $\sclr{L}$}
			\State $\forall r, \vec{u}_r = \frac{\vec{a}_r}{\mat{K}\vec{v}_r}$
			\State $\vec{b} = \prod_r \left(\mat{K}^T\vec{u}_r\right)^{\vec{\lambda}_r}$
			\State $\forall r, \vec{v}_r = \frac{\vec{b}}{\mat{K}^T\vec{u}_r}$
		\EndFor
	\end{algorithmic}
\end{algorithm}

\subsection{Computing geodesic distances}\label{subsec:computegeodist}

While \citet{solomon_convolutional_2015} discretize the diffusion equation using a cotangent Laplacian because they deal with triangular meshes, we prefer a weighted graph Laplacian parameterized by the metric weights $\vec{w}$, which we detail hereafter.

As mentioned in section \ref{subsec:metric_param}, the weighted adjacency matrix $\mat{W}$ is defined as $\mat{W}_{i,j}=\mat{W}_{j,i}=w_{i,j}$ where $w_{i,j}$ are the (undirected) edge weights parameterizing the metric.
It is symmetric and usually sparse, since $w_{i,j}$ is non-zero only for vertices that are connected, and $0$ otherwise.
The diagonal weighted degree matrix sums the weights of each row on the diagonal: $\mat{\Lambda} \eqdef \diag(\vec{d})$, with  $\vec{d}_{i} \eqdef \sum_{j=1}^{N} w_{i,j}$.
The negative semi-definite weighted graph Laplacian matrix is then defined as $\mat{L}_{\vec{w}} = \mat{W} - \mat{\Lambda}$.

We discretize the heat equation in time using an implicit Euler scheme and perform $\sclr{S}$ sub-steps.
It is crucial to rely on an implicit stepping scheme to obtain approximated kernels supported on the full domain, in order for Sinkhorn iterations to be well conditioned (as opposed to using an explicit Euler scheme, which would break Sinkhorn's convergence).
Denoting $\vec{v}$ the initial condition of the heat diffusion, $\vec{u}$ the final solution after a time $\varepsilon/4$, and $\mat{L}_{\vec{w}}$ our discrete Laplacian operator, we solve 
\begin{align}\label{eq:applykernelMk}
\left(\Id - \frac{\sclr{\varepsilon}}{4\sclr{S}}\mat{L}_{\vec{w}} \right)^\sclr{S}\vec{u} &= \vec{v}.
\end{align}

We denote by $\mat{M}$ the symmetric matrix $\Id - \frac{\varepsilon}{4\sclr{S}}\mat{L}_{\vec{w}}$.
Applying the kernel $\mat{K} \eqdef \mat{M}^{-\sclr{S}}$ to a vector $\vec{v}$ is then simply achieved by solving $\sclr{S}$ linear systems: $\vec{u} = \mat{K}\vec{v} = \mat{M}^{-\sclr{S}}\vec{v}$.
We never compute the full kernel matrix $\mat{K}$ because it is of size $N^2$, which quickly becomes prohibitive in time and memory as histograms grow ($\approx$ 12GB for histograms of size $N=200^2$, and $\approx$ 30GB for histograms of size $N=40^3$).

The intuition behind this scheme is that, $(\Id - \frac{\varepsilon}{4\sclr{S}}\mat{L}_{\vec{w}})^{-\sclr{S}}$ approximates the heat kernel for large $\sclr{S}$, which itself for small $\varepsilon$ approximates the geodesic exponential kernel, which is of the form $\exp(-d^2(x,y)/\varepsilon)$ for a small $\varepsilon$ with $d$ the geodesic distance on a manifold. %
Note however that this link is not valid on graphs or triangulations, although it has been reported to be very effective when choosing $\varepsilon$ in proportion to the discretization grid size (see~\cite{solomon_convolutional_2015}).
Our method can thus be seen as choosing a cost of the form \begin{align}\label{eq:defcostheat}
\mat{C}_{\vec{w}} \eqdef -\varepsilon \log\left( (\Id - \frac{\varepsilon}{4\sclr{S}} \mat{L}_{\vec{w}} )^{-\sclr{S}}\right),
\end{align}
even though we never compute this cost matrix explicitly.

The chief advantage of the formula~\eqref{eq:applykernelMk} to approximate a kernel evaluation is that the same matrix is repeatedly used $\sclr{S}$ times, which is itself repeated at each iteration of Sinkhorn's algorithm~\ref{alg:sinkbary} to evaluate barycenters.
Following~\citet{solomon_convolutional_2015}, a dramatic speed-up is thus obtained by pre-computing a sparse Cholesky decomposition of $\mat{M}$.
For instance, on a 2-D domain, the number of non-zero elements of such a factorization is of the order of $N$, so that each linear system resolution has linear complexity.

\subsection{Inverse problem regularization}
\label{subsec:regul}

The metric learning problem is severely ill-posed and this difficulty is further increased by the fact that the corresponding optimization problem~\eqref{eq:mainpbmdisc} is non-convex.
These issues can be mitigated by introducing a regularization term $f(\vec{w}) \eqdef \lambda_c f_c(\vec{w}) + \lambda_s f_s(\vec{w})$.
Note also that since a global variation of scale in the metric does not affect the solution of optimal transport, the problem needs to be constrained, otherwise metric weights tend to infinity when they are optimized.

We introduce two different regularizations: $f_c$ forces the weights to be close to 1 (this controls how much the space becomes inhomogeneous and  anisotropic), and $f_s$ constrains the weights to be spatially smooth.
Since we carry out the numerical examples on graphs that are 2-D and 3-D grids, we use a smoothing regularization $f_s$ that is specific to that case.
This term must be adapted when dealing with general graphs.

The first regularization is imposed by adding the following term to our energy functional, multiplied by a control coefficient~$\lambda_c$:
\begin{align}
f_c(\vec{w}) \eqdef \norm{\vec{w}-\mathds{1}}_2^2.
\end{align}

To enforce the second prior, we add the following term to our functional, multiplied by a control coefficient~$\lambda_s$:
\begin{align}
	f_s(\vec{w}) \eqdef \sum_{e \in E} ~\left(\sum_{e'\in\mathcal{N}_\parallel(e)}(w_e - w_{e'})\right)^2,
\end{align}
with $E$ the set of undirected edges, and $\mathcal{N}_\parallel$ the set of neighbor edges of the same orientation, as illustrated in \autoref{fig:regul_edge} for the 2-D case.

We regularize separately horizontal and vertical edges to ensure that we recover an anisotropic metric.
This is important for various applications, for example when dealing with color histograms, as MacAdam's ellipses reveal \cite{macadam_visual_1942}.

\begin{figure}
	\centering
	\begin{overpic}[width=0.7\linewidth]{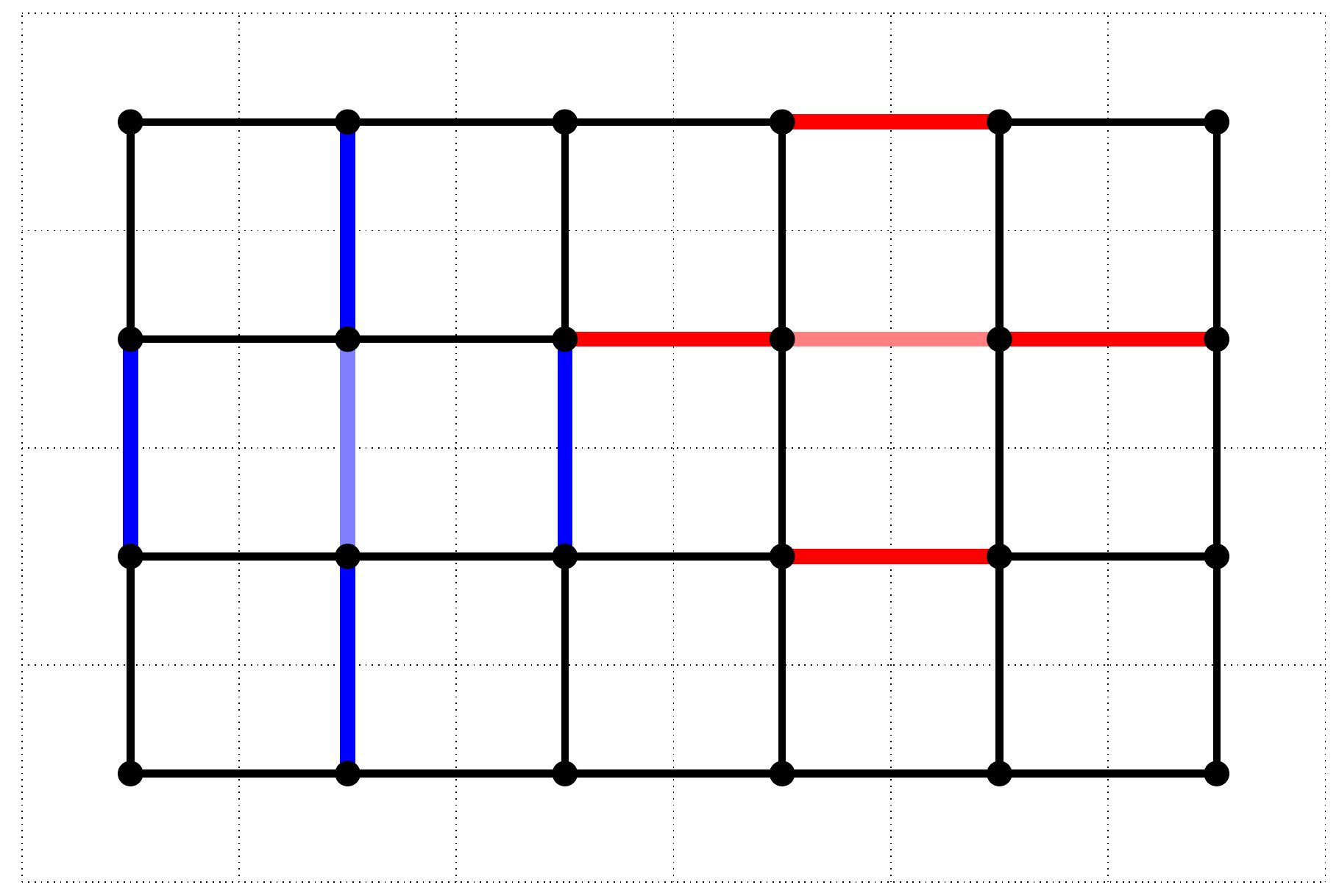}%
		\let\oldput\put
		\def\put(#1,#2)#3{%
			\oldput(#1,#2){\Large #3}%
		}
		\put(27,32){\color{blue!50!white}$e_v$}
		\put(62,36){\color{red!50!white}$e_h$}
		\put(13,62){\color{blue} $\mathcal{N}_\parallel({\color{blue!50!white}e_v})$}
		\put(73,62){\color{red} $\mathcal{N}_\parallel({\color{red!50!white}e_h})$}
	\end{overpic}
	\caption{Smooth prior: for each vertical ($e_v$) and horizontal ($e_h$) edge, we minimize the squared sum of weight differences with its respective neighbors of the same orientation.} %
	\label{fig:regul_edge}
\end{figure}

The selection of the regularization parameters $(\lambda_c,\lambda_s)$ and their impact on the recovered metric is discussed in section \ref{sec:discussion}.

\subsection{Implementation}
\label{subsec:implementation}

In order to ensure positivity of the metric weights, problem~\eqref{eq:mainpbmdisc} is solved after a log-domain change of variable $\vec{w} = e^{\vec{w'}}$ and the optimization on $\vec{w'}$ is achieved using the L-BFGS algorithm.

Our method is implemented in Python with the Pytorch framework, which supports automatic differentiation (AD)~\cite{paszke_automatic_2017}.
The gradient is evaluated using reverse mode automatic differentiation~\cite{griewank_who_2012,griewank_evaluating_2008}, which has a numerical complexity of the same order as that of evaluating the minimized functional (which corresponds to the evaluation of $\sclr{P}$ barycenters).
\new{
Reverse mode automatic differentiation computes the gradient of the energy~\eqref{eq:mainpbmdisc} by back-propagating the gradient of the loss function $\mathcal{L}$ through the computational graph defined by the Sinkhorn barycenter algorithm.
This back-propagation operates by applying the adjoint Jacobian of each operation involved in the algorithm.
These adjoints Jacobians are applied in a backward pass, in an ordering that is the reverse of the one used to perform the forward pass of the algorithm. 
Most of these operations are elementary functions, such as matrix product and pointwise operations (e.g. division or exponentiation).
These functions are built-in for most standard automatic differentiation libraries, so their adjoint Jacobians are already implemented.
The only non-trivial case, which is usually not built-in, and can also be optimized for memory usage, is the application of the heat kernel $\mat{M}^{-\sclr{S}}$ to some vector $\vec{v}$.

The following proposition gives the expression of the adjoint Jacobian $[\partial \Phi(\vec{w})]^T : \mathbb{R}^N \rightarrow \mathbb{R}^\sclr{K}$ of the map $\Phi : \mathbb{R}^\sclr{K} \rightarrow \mathbb{R}^N$ 
\begin{equation}\label{eq-func-weights-to-heat}
\Phi : \vec{w} \in \mathbb{R}^{\sclr{K}} \mapsto ( \mat{M} )^{-\sclr{S}} \vec{v} \in \mathbb{R}^N
\;\text{where}\;
\mat{M} \eqdef {\Id} - \frac{\sclr{\varepsilon}}{4\sclr{S}} \mat{L}_{\vec{w}},
\end{equation}
from the graph weights $\vec{w}$ to the approximate solution of the heat diffusion, obtained by applying $\sclr{S}$ implicit Euler steps starting from $\vec{v} \in \mathbb{R}^N$.

\begin{proposition}
	For $\vec{w} \in \mathbb{R}^{\sclr{K}}$ and $\vec{g} \in \mathbb{R}^N$, one has for each of the $\sclr{K}$ edge indices $(i,j)$ 
	\begin{align}\label{eq-adjoint-formula}
	&[\partial \Phi(\vec{w})]^T(\vec{g})_{i,j}
	= 
	\frac{\varepsilon}{4\sclr{S}}\sum_{\ell=0}^{\sclr{S}-1} \left(\vec{g}_i^\ell - \vec{g}_j^\ell \right) \left(\vec{v}_i^\ell - \vec{v}_j^\ell \right) \\
	&\text{where}\quad
	\left\{
	\begin{array}{l}
	\vec{g}^\ell \eqdef \mat{M}^{\ell-\sclr{S}}\vec{g} \\
	\vec{v}^\ell \eqdef \mat{M}^{-\ell-1} \vec{v}
	\end{array}
	\right.
	\end{align}
\end{proposition}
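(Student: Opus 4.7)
}

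The plan is to expand $\partial(\mat{M}^{-\sclr{S}})$ by applying the product rule to the $\sclr{S}$-fold product $\mat{M}^{-1}\cdots\mat{M}^{-1}$, then take the $\langle \vec{g},\cdot\rangle$ pairing with the perturbation of $\Phi$ and read off the adjoint Jacobian edge-by-edge. Throughout, I will use that $\mat{M}$ is symmetric (because $\mat{L}_{\vec{w}}$ is), and that $\mat{L}_{\vec{w}}$ depends linearly on $\vec{w}$.

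First I would establish the basic identity $\partial(\mat{M}^{-1}) = -\mat{M}^{-1}(\partial\mat{M})\mat{M}^{-1}$ by differentiating $\mat{M}^{-1}\mat{M}=\Id$. Applying the product rule to $\mat{M}^{-\sclr{S}}$ yields
\begin{align*}
\partial(\mat{M}^{-\sclr{S}}) \;=\; -\sum_{k=1}^{\sclr{S}}\mat{M}^{-k}\,(\partial \mat{M})\,\mat{M}^{-(\sclr{S}-k+1)}.
\end{align*}
Since $\partial \mat{M} = -\tfrac{\varepsilon}{4\sclr{S}}\,\partial\mat{L}_{\vec{w}}$, substituting and evaluating at $\vec{v}$ gives
\begin{align*}
\partial\Phi(\vec{w})[\delta\vec{w}] \;=\; \frac{\varepsilon}{4\sclr{S}}\sum_{\ell=0}^{\sclr{S}-1} \mat{M}^{-(\sclr{S}-\ell)}\bigl(\partial\mat{L}_{\vec{w}}[\delta\vec{w}]\bigr)\mat{M}^{-(\ell+1)}\vec{v},
\end{align*}
after a reindexing $\ell = \sclr{S}-k$.

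Next I would pair with $\vec{g}$ and use symmetry of $\mat{M}$ to move the left operator onto $\vec{g}$, obtaining
\begin{align*}
\langle\vec{g},\partial\Phi(\vec{w})[\delta\vec{w}]\rangle \;=\; \frac{\varepsilon}{4\sclr{S}}\sum_{\ell=0}^{\sclr{S}-1} \langle \vec{g}^\ell,\,(\partial\mat{L}_{\vec{w}}[\delta\vec{w}])\,\vec{v}^\ell\rangle,
\end{align*}
which is precisely where the intermediate vectors $\vec{g}^\ell=\mat{M}^{\ell-\sclr{S}}\vec{g}$ and $\vec{v}^\ell=\mat{M}^{-\ell-1}\vec{v}$ from the statement arise: they are the back-propagated adjoint state and the forward state at step $\ell$ of the implicit Euler scheme.

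It then remains to compute the sensitivity of $\mat{L}_{\vec{w}}$ with respect to a single edge weight $w_{i,j}$. Using $\mat{L}_{\vec{w}}=\mat{W}-\mat{\Lambda}$ with $\mat{\Lambda}_{kk}=\sum_m w_{k,m}$ and the undirected convention $w_{i,j}=w_{j,i}$, the partial $\partial_{w_{i,j}}\mat{L}_{\vec{w}}$ is supported on the four entries $(i,j),(j,i),(i,i),(j,j)$ with values $+1,+1,-1,-1$. A direct expansion of the bilinear form yields
\begin{align*}
(\vec{g}^\ell)^{\!\top}\!\bigl(\partial_{w_{i,j}}\mat{L}_{\vec{w}}\bigr)\vec{v}^\ell \;=\; \bigl(\vec{g}^\ell_i-\vec{g}^\ell_j\bigr)\bigl(\vec{v}^\ell_i-\vec{v}^\ell_j\bigr)
\end{align*}
(up to an overall sign depending on whether one writes $\mat{L}=\mat{W}-\mat{\Lambda}$ or its opposite). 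Collecting coefficients of $\delta w_{i,j}$ across the sum then yields formula~\eqref{eq-adjoint-formula}.

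The routine parts are the two identities on $\partial(\mat{M}^{-1})$ and on $\partial_{w_{i,j}}\mat{L}_{\vec{w}}$, and the reindexing of the telescoping sum. The only step that deserves care is keeping track of the two index orderings in the product rule (one should lead to $\vec{g}^\ell$, the other to $\vec{v}^\ell$, and the powers of $\mat{M}^{-1}$ must add up to $\mat{M}^{-\sclr{S}}$ on each side), together with the sign bookkeeping induced by $\partial\mat{M}=-\tfrac{\varepsilon}{4\sclr{S}}\partial\mat{L}_{\vec{w}}$ and by the Laplacian sign convention used in the paper.
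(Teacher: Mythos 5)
Your proposal is correct and follows essentially the same route as the paper: the appendix factors $\Phi$ as $\vec{w}\mapsto\mat{M}\mapsto\mat{M}^{-1}\mapsto\mat{M}^{-\sclr{S}}\mapsto\mat{M}^{-\sclr{S}}\vec{v}$ and composes the four adjoint Jacobians in reverse order, which produces exactly your telescoping sum $\sum_{\ell=0}^{\sclr{S}-1}\mat{M}^{-(\sclr{S}-\ell)}\bigl(\partial\mat{L}_{\vec{w}}\bigr)\mat{M}^{-(\ell+1)}\vec{v}$, the same appeal to the symmetry of $\mat{M}$, and the same four-entry sensitivity of the Laplacian, so your write-up is just the paper's reverse-mode composition presented as a single product-rule expansion. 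On the one point you leave open, the sign: with the paper's convention $\mat{L}_{\vec{w}}=\mat{W}-\mat{\Lambda}$ the perturbation $\partial_{w_{i,j}}\mat{L}_{\vec{w}}$ has entries $+1$ at $(i,j),(j,i)$ and $-1$ at $(i,i),(j,j)$, so $(\vec{g}^\ell)^T\bigl(\partial_{w_{i,j}}\mat{L}_{\vec{w}}\bigr)\vec{v}^\ell=\vec{g}_i^\ell\vec{v}_j^\ell+\vec{g}_j^\ell\vec{v}_i^\ell-\vec{g}_i^\ell\vec{v}_i^\ell-\vec{g}_j^\ell\vec{v}_j^\ell=-\bigl(\vec{g}_i^\ell-\vec{g}_j^\ell\bigr)\bigl(\vec{v}_i^\ell-\vec{v}_j^\ell\bigr)$, and combining this with your (correct) intermediate identity gives the displayed formula with a leading $-\varepsilon/(4\sclr{S})$ --- the same sign one obtains by carrying the appendix's own composition through to the end --- so your hedge points at a genuine sign inconsistency in the stated proposition rather than at a gap in your argument.
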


\begin{proof}
	The mapping $\Phi$ is the composition of $\phi_1: \vec{w} \mapsto \mat{M}$, $\phi_2: \mat{M} \mapsto \mat{U} = \mat{M}^{-1}$, $\phi_3: \mat{U} \mapsto \mat{V} = \mat{U}^{\sclr{S}}$ and $\phi_4: \mat{V} \mapsto \vec{y} = \mat{V}\vec{v}$. The formula follows by composing the adjoint Jacobians of each of these operations, which are detailed in \autoref{apx:kernelgrad}. \qed
\end{proof}

Note that the vectors $\vec{v}_\ell$ are already computed during the forward pass of the algorithm, so they need to be stored. Only the vectors $\vec{g}_\ell$ need to be computed by iteratively solving linear systems associated to $\mat{M}$.

It is important to set a fixed number of iterations $L$ for the Sinkhorn barycenter algorithm when using automatic differentiation, because a stopping criteria based on convergence might be problematic to differentiate (it is a discontinuous function). Moreover, it would render memory consumption (and speed) unpredictable, which may be problematic due to the limited amount of memory available.

}

\section{Experiments}
\label{sec:experiments}

We first show a few synthetic examples, in which the input sequence of measures has been generated as a Wasserstein geodesic using a ground metric known beforehand.
This ground truth metric is compared with the output of our algorithm. 
We then present an application to a task of learning color variations in image sequences.

In the following, an ``interpolation'' refers to a displacement interpolation, unless stated otherwise.

\subsection{Synthetic experiments}

As mentioned in \ref{subsec:prob_statement}, our algorithm solves an inverse problem: given a sequence of histograms representing a movement of mass, we aim at fitting a metric for which that sequence can be sufficiently well approached by a displacement interpolation between the first and last frame.

\subsubsection{Retrieving a ground truth metric}
In \autoref{fig:toyinverseprob1}, \autoref{fig:toyinverseprob2} and \autoref{fig:toyinverseprob3}, we test our algorithm by applying it on different sequences of measures that are themselves geodesics generated using handcrafted metrics, and verify that the learned metric is close to the original one, which constitutes a ground truth.
In general, it is impossible to recover with high precision the exact same metric, because such an inverse problem is too ill-posed (many different metrics can generate the same interpolation sequence) and the energy is non-convex.
Moreover, regularization introduces a bias while helping to fight against this non-convexity.
Hence, we attempt to find a metric that shares the same large scale features as the original one.

\begin{figure*}
	\centering
	\newcommand{\imhspace}{\dimexpr 2pt}
	\newcommand{\imvspace}{\dimexpr -1pt + 2pt}
	\newcommand{\imheighta}{\dimexpr(0.97\linewidth-\imhspace*10)*265/3308}
	\newcommand{\imheightb}{\dimexpr(0.97\linewidth-\imhspace*10)*265/3308}

	Horizontal \hspace{1.5em} Vertical \hfill \vspace{0.5em} \\
	\adjincludegraphics[height=\imheighta,trim={{0.11\width} {0.06\height} {0.05\width} {0.12\height}}, clip]{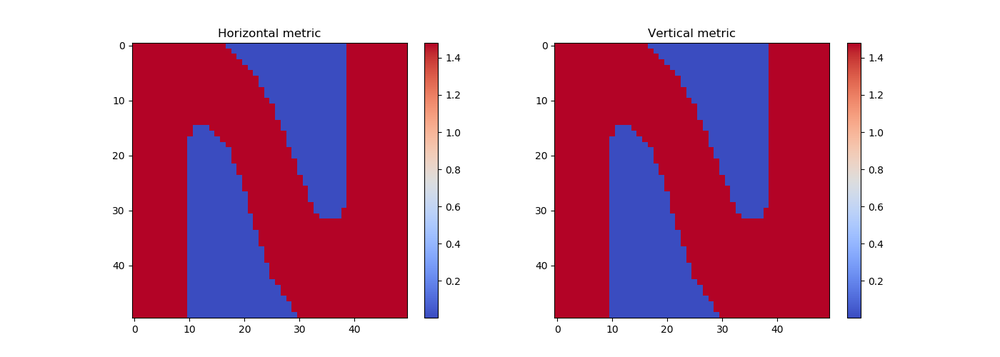}\hspace{\imhspace}
	\adjincludegraphics[height=\imheighta]{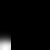}\hspace{\imhspace}
	\adjincludegraphics[height=\imheighta]{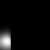}\hspace{\imhspace}
	\adjincludegraphics[height=\imheighta]{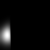}\hspace{\imhspace}
	\adjincludegraphics[height=\imheighta]{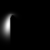}\hspace{\imhspace}
	\adjincludegraphics[height=\imheighta]{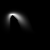}\hspace{\imhspace}
	\adjincludegraphics[height=\imheighta]{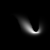}\hspace{\imhspace}
	\adjincludegraphics[height=\imheighta]{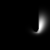}\hspace{\imhspace}
	\adjincludegraphics[height=\imheighta]{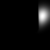}\hspace{\imhspace}
	\adjincludegraphics[height=\imheighta]{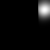}\hspace{\imhspace}
	\adjincludegraphics[height=\imheighta]{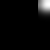} \\
	\vspace{\imvspace}
	\adjincludegraphics[height=\imheightb,trim={{0.11\width} {0.06\height} {0.05\width} {0.12\height}}, clip]{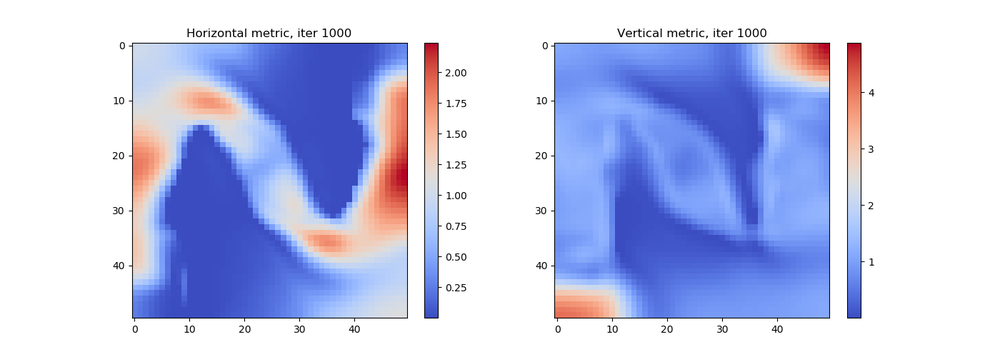}\hspace{\imhspace}
	\adjincludegraphics[height=\imheightb]{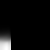}\hspace{\imhspace}
	\adjincludegraphics[height=\imheightb]{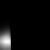}\hspace{\imhspace}
	\adjincludegraphics[height=\imheightb]{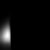}\hspace{\imhspace}
	\adjincludegraphics[height=\imheightb]{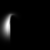}\hspace{\imhspace}
	\adjincludegraphics[height=\imheightb]{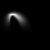}\hspace{\imhspace}
	\adjincludegraphics[height=\imheightb]{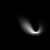}\hspace{\imhspace}
	\adjincludegraphics[height=\imheightb]{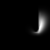}\hspace{\imhspace}
	\adjincludegraphics[height=\imheightb]{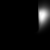}\hspace{\imhspace}
	\adjincludegraphics[height=\imheightb]{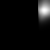}\hspace{\imhspace}
	\adjincludegraphics[height=\imheightb]{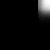} \\

	\caption{First row: an initial metric (two leftmost images: weights on horizontal and vertical edges) is used to generate a histogram sequence.
	Second row: we apply our algorithm on that sequence, to recover the initial metric.
	The algorithm is able to recover the blue zones avoided by the mass, and red zones on the path it is taking.}
	\label{fig:toyinverseprob1}
\end{figure*}

\begin{figure*}
	\centering
	\newcommand{\imhspace}{\dimexpr 2pt}
	\newcommand{\imvspace}{\dimexpr -1pt + 2pt}
	\newcommand{\imheighta}{\dimexpr(0.97\linewidth-\imhspace*10)*265/3308}
	\newcommand{\imheightb}{\dimexpr(0.97\linewidth-\imhspace*10)*265/3308}

	Horizontal \hspace{1.5em} Vertical \hfill \vspace{0.5em} \\
	\adjincludegraphics[height=\imheighta,trim={{0.11\width} {0.06\height} {0.05\width} {0.12\height}}, clip]{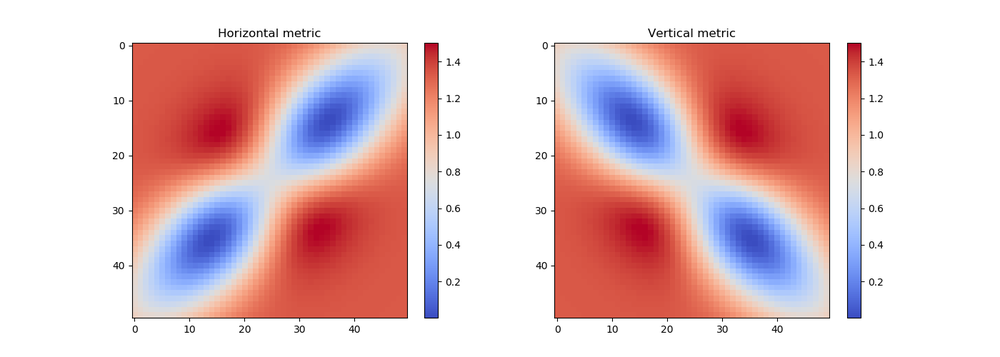}\hspace{\imhspace}
	\adjincludegraphics[height=\imheighta]{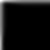}\hspace{\imhspace}
	\adjincludegraphics[height=\imheighta]{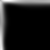}\hspace{\imhspace}
	\adjincludegraphics[height=\imheighta]{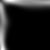}\hspace{\imhspace}
	\adjincludegraphics[height=\imheighta]{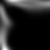}\hspace{\imhspace}
	\adjincludegraphics[height=\imheighta]{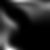}\hspace{\imhspace}
	\adjincludegraphics[height=\imheighta]{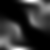}\hspace{\imhspace}
	\adjincludegraphics[height=\imheighta]{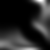}\hspace{\imhspace}
	\adjincludegraphics[height=\imheighta]{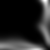}\hspace{\imhspace}
	\adjincludegraphics[height=\imheighta]{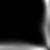}\hspace{\imhspace}
	\adjincludegraphics[height=\imheighta]{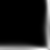} \\
	\vspace{\imvspace}
	\adjincludegraphics[height=\imheightb,trim={{0.11\width} {0.06\height} {0.05\width} {0.12\height}}, clip]{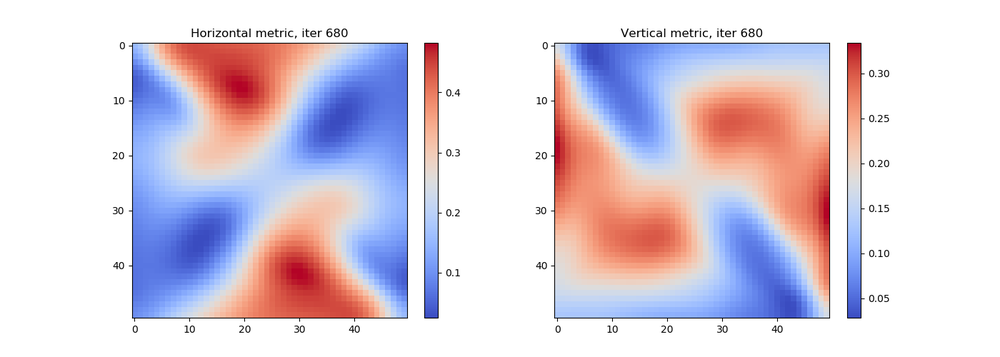}\hspace{\imhspace}
	\adjincludegraphics[height=\imheightb]{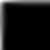}\hspace{\imhspace}
	\adjincludegraphics[height=\imheightb]{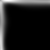}\hspace{\imhspace}
	\adjincludegraphics[height=\imheightb]{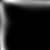}\hspace{\imhspace}
	\adjincludegraphics[height=\imheightb]{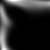}\hspace{\imhspace}
	\adjincludegraphics[height=\imheightb]{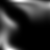}\hspace{\imhspace}
	\adjincludegraphics[height=\imheightb]{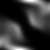}\hspace{\imhspace}
	\adjincludegraphics[height=\imheightb]{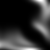}\hspace{\imhspace}
	\adjincludegraphics[height=\imheightb]{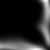}\hspace{\imhspace}
	\adjincludegraphics[height=\imheightb]{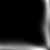}\hspace{\imhspace}
	\adjincludegraphics[height=\imheightb]{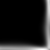} \\

	\caption{First row: an initial metric (two leftmost images: weights on horizontal and vertical edges) is used to generate a histogram sequence.
	Second row: we apply our algorithm on that sequence, to recover the initial metric.
	The algorithm recovers the high (red) and low (blue) diffusion areas horizontally, as well as vertically.}
	\label{fig:toyinverseprob2}
\end{figure*}

\begin{figure*}
	\centering
	\newcommand{\imhspace}{\dimexpr 2pt}
	\newcommand{\imvspace}{\dimexpr -1pt + 2pt}
	\newcommand{\imheighta}{\dimexpr(0.97\linewidth-\imhspace*10)*265/3308}
	\newcommand{\imheightb}{\dimexpr(0.97\linewidth-\imhspace*10)*265/3308}

	Horizontal \hspace{1.5em} Vertical \hfill \vspace{0.5em} \\
	\adjincludegraphics[height=\imheighta,trim={{0.11\width} {0.06\height} {0.05\width} {0.12\height}}, clip]{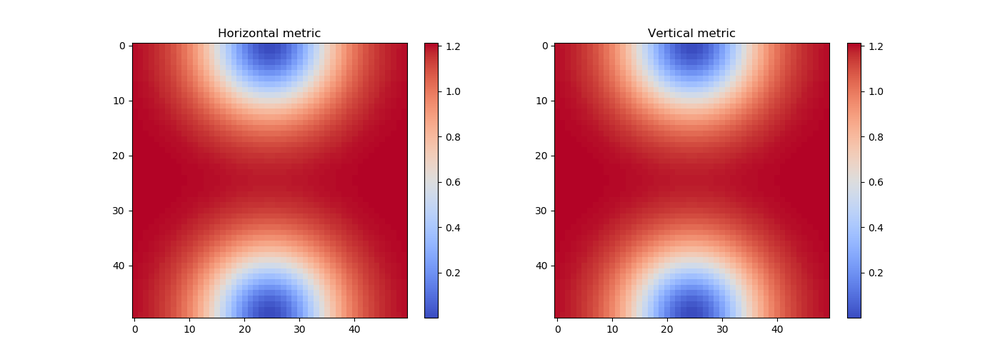}\hspace{\imhspace}
	\adjincludegraphics[height=\imheighta]{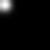}\hspace{\imhspace}
	\adjincludegraphics[height=\imheighta]{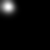}\hspace{\imhspace}
	\adjincludegraphics[height=\imheighta]{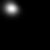}\hspace{\imhspace}
	\adjincludegraphics[height=\imheighta]{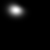}\hspace{\imhspace}
	\adjincludegraphics[height=\imheighta]{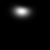}\hspace{\imhspace}
	\adjincludegraphics[height=\imheighta]{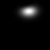}\hspace{\imhspace}
	\adjincludegraphics[height=\imheighta]{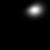}\hspace{\imhspace}
	\adjincludegraphics[height=\imheighta]{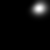}\hspace{\imhspace}
	\adjincludegraphics[height=\imheighta]{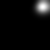}\hspace{\imhspace}
	\adjincludegraphics[height=\imheighta]{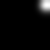} \\
	\vspace{\imvspace}
	\adjincludegraphics[height=\imheightb,trim={{0.11\width} {0.06\height} {0.05\width} {0.12\height}}, clip]{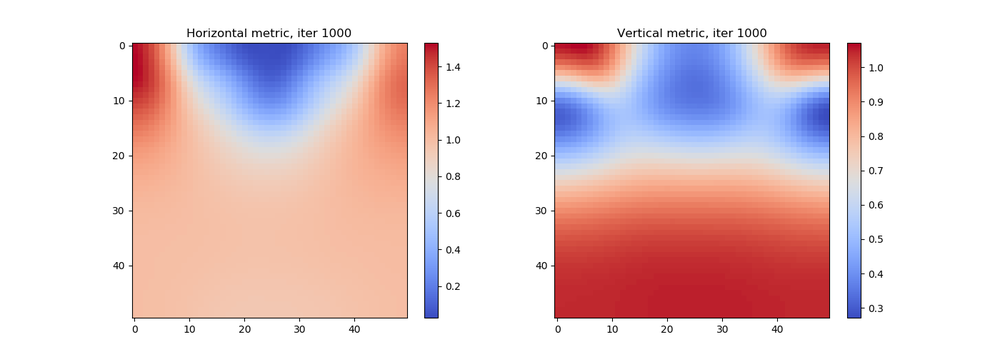}\hspace{\imhspace}
	\adjincludegraphics[height=\imheightb]{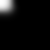}\hspace{\imhspace}
	\adjincludegraphics[height=\imheightb]{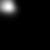}\hspace{\imhspace}
	\adjincludegraphics[height=\imheightb]{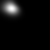}\hspace{\imhspace}
	\adjincludegraphics[height=\imheightb]{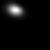}\hspace{\imhspace}
	\adjincludegraphics[height=\imheightb]{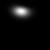}\hspace{\imhspace}
	\adjincludegraphics[height=\imheightb]{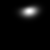}\hspace{\imhspace}
	\adjincludegraphics[height=\imheightb]{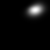}\hspace{\imhspace}
	\adjincludegraphics[height=\imheightb]{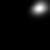}\hspace{\imhspace}
	\adjincludegraphics[height=\imheightb]{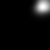}\hspace{\imhspace}
	\adjincludegraphics[height=\imheightb]{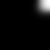} \\

	\caption{First row: an initial metric (two leftmost images: weights on horizontal and vertical edges) is used to generate a histogram sequence.
	Second row: we apply our algorithm on that sequence, to recover the initial metric.
	This figure shows an example of a metric detail not being recovered because mass is not traveling in that region.}
	\label{fig:toyinverseprob3}
\end{figure*}

We run three experiments with different handcrafted metrics, on 2-D histograms defined on an $n$ by $n$ Cartesian grid.
The parameters for these experiments are: a grid of size $n=50$, $\sclr{L}=50$ Sinkhorn iterations, an entropic regularization factor $\varepsilon$=1.2e-2, $\sclr{S}=100$ sub-steps for the diffusion equation, 1000 L-BFGS iterations, and the metric regularization factor $\lambda_c=0$.
The other regularization factor is $\lambda_s=0.03$ for the first two experiments and $\lambda_s=1.0$ for the third one.
Finally, each of the three experiments is tested with three different loss functions, and we display the result that is closest to the ground truth.
The different loss functions are the $L^1$ norm, the squared $L^2$ norm and the Kullback-Leibler divergence:
\begin{align}
\mathcal{L}_{1}(\vec{p},\vec{q}) &\eqdef \norm{\vec{p}-\vec{q}}_1, \label{eq:lossL1} \\
\mathcal{L}_{2}(\vec{p},\vec{q}) &\eqdef \norm{\vec{p}-\vec{q}}_2^2, \label{eq:lossL2} \\
\mathcal{L}_{KL}(\vec{p},\vec{q}) &\eqdef \mathds{1}^T (\vec{p}\odot\log(\vec{p}\oslash\vec{q})-\vec{p}+\vec{q}), \label{eq:lossKL} 
\end{align}
with $\odot$ and $\oslash$ being respectively the element-wise multiplication and division.
We will see that the best loss function varies depending on the data.

The metric $\vec{w}$ is located along either vertical or horizontal edges.
We thus display two images each time: one for the horizontal and one for the vertical edges.

In \autoref{fig:toyinverseprob1}, we are able to reconstruct the input sequence, and retrieve the main zones of low diffusion (in blue), that deviate the mass from a straight trajectory.
The $\mathcal{L}_1$ loss gave the best result.

In \autoref{fig:toyinverseprob2}, the original horizontal and vertical metric weights are different and this experiment shows that we are able to recover the distinct features of each metric \ie the dark blue and dark red areas.
The $\mathcal{L}_{KL}$ loss gave the best result.

In \autoref{fig:toyinverseprob3}, the original metric is composed of two obstacles, but only one of them is in the mass' trajectory.
We can observe that obstacles that are not approached by any mass are not recovered, which is expected, because the algorithm cannot find information in these areas.
The $\mathcal{L}_2$ loss gave the best result.

\new{
\subsubsection{Hand-crafted interpolations}

We now test our algorithm on a dataset that has not been generated by the forward model, as in the previous paragraph.
We reproduced in \autoref{fig:toydataMC} the trajectory of \autoref{fig:toyinverseprob3} using a moving Gaussian distribution.
By comparing them, we can see that reconstructions are close to the input, but not identical, since the metric induces an inhomogeneous diffusion, resulting in an apparent motion blur.
The horizontal metric obtained on this dataset is similar to the one obtained in \autoref{fig:toyinverseprob3}, with a blue obstacle on the upper middle part that prevents mass from going straight.
However, it differs in that the blue region extends towards the center of the support.
This can be explained by the fact that the algorithm tries to fit the input densities, which are less diffuse horizontally. Therefore, it will decrease the metric so that diffusion is less strong in that zone.
Parameters for this experiment were $n=50$, $\sclr{L}=50$, $\varepsilon$=1.2e-2, $\sclr{S}=100$, $\lambda_c=0$, $\lambda_s=10$, and an $\mathcal{L}_2$ loss.

}

\begin{figure*}
	\centering
	\newcommand{\imhspace}{\dimexpr 2pt}
	\newcommand{\imvspace}{\dimexpr -1pt + 2pt}
	\newcommand{\imheighta}{\dimexpr(0.97\linewidth-\imhspace*10)*449/5585}
	\newcommand{\imheightb}{\dimexpr(0.97\linewidth-\imhspace*10)*449/5585}
	
	\hspace{2.55\imheighta}\hspace{3\imhspace}
	\adjincludegraphics[height=\imheighta]{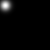}\hspace{\imhspace}
	\adjincludegraphics[height=\imheighta]{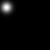}\hspace{\imhspace}
	\adjincludegraphics[height=\imheighta]{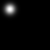}\hspace{\imhspace}
	\adjincludegraphics[height=\imheighta]{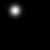}\hspace{\imhspace}
	\adjincludegraphics[height=\imheighta]{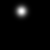}\hspace{\imhspace}
	\adjincludegraphics[height=\imheighta]{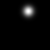}\hspace{\imhspace}
	\adjincludegraphics[height=\imheighta]{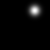}\hspace{\imhspace}
	\adjincludegraphics[height=\imheighta]{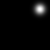}\hspace{\imhspace}
	\adjincludegraphics[height=\imheighta]{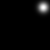}\hspace{\imhspace}
	\adjincludegraphics[height=\imheighta]{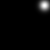} \\
	\vspace{-1.5em}
	\hspace{0.em} Horizontal \hspace{1.25em} Vertical \hfill \vspace{0.5em} \\
	\adjincludegraphics[height=\imheightb,trim={{0.\width} {0.02\height} {0.0\width} {0.06\height}}, clip]{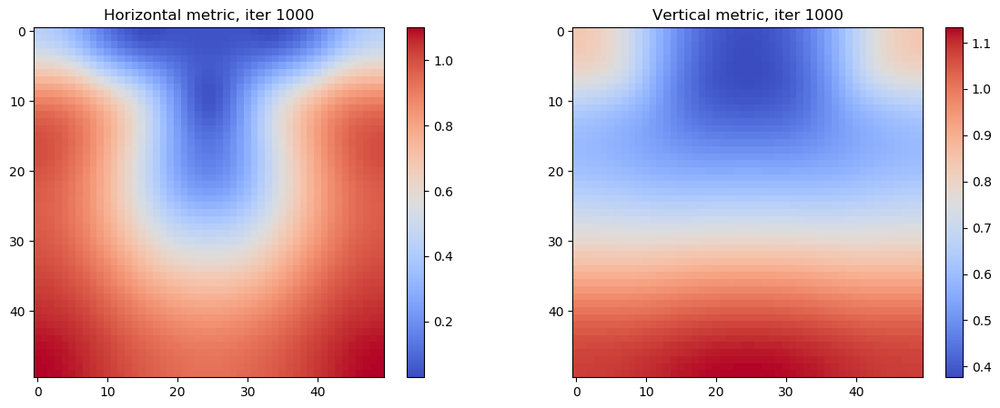}\hspace{\imhspace}
	\adjincludegraphics[height=\imheightb]{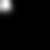}\hspace{\imhspace}
	\adjincludegraphics[height=\imheightb]{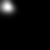}\hspace{\imhspace}
	\adjincludegraphics[height=\imheightb]{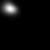}\hspace{\imhspace}
	\adjincludegraphics[height=\imheightb]{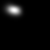}\hspace{\imhspace}
	\adjincludegraphics[height=\imheightb]{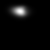}\hspace{\imhspace}
	\adjincludegraphics[height=\imheightb]{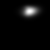}\hspace{\imhspace}
	\adjincludegraphics[height=\imheightb]{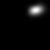}\hspace{\imhspace}
	\adjincludegraphics[height=\imheightb]{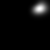}\hspace{\imhspace}
	\adjincludegraphics[height=\imheightb]{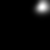}\hspace{\imhspace}
	\adjincludegraphics[height=\imheightb]{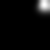} \\

	\caption{\new{Example of the resulting metric on a dataset that is not generated by the forward model.
	First row: synthetic dataset of a Gaussian distribution following the same trajectory as in \autoref{fig:toyinverseprob3}.
	Second row: we apply our algorithm to this dataset to recover a metric (two leftmost images), and the reconstructions.
	We recover a metric that shares the main features of the one in \autoref{fig:toyinverseprob3}.}
	}
	\label{fig:toydataMC}
\end{figure*}

\new{
\subsubsection{Multiple sequences}

We further attempted to recover metrics from multiple observed paths by summing our functional \eqref{eq:mainpbmdisc} over multiple input sequences.
In \autoref{fig:toy_multi}, we show a ground truth metric, and four sequences generated with that metric.
They are all interpolations of two Gaussian distributions placed on either side of the support: horizontally, vertically, or diagonally.
We then present in that figure three different metrics: the first is learned on the first sequence, the second is learned on the third sequence, and the last one is learned on all sequences.
We can see that the metric learned on all sequences is closer to the ground truth.
By providing more learning data, we reduce the space of solutions and prevent overfitting, effectively making the problem better posed.
This comes at the cost of computational overhead, but that only increases linearly with the number of sequences.
Parameters for these experiments were $n=50$, $\sclr{L}=50$, $\varepsilon$=1.2e-2, $\sclr{S}=100$, $\lambda_c=0$, $\lambda_s=3$, and an $\mathcal{L}_2$ loss.
}

\begin{figure}
	\centering
	
	Horizontal Metric \hspace{5em} Vertical Metric \vspace{0.5em} \\
	\raisebox{2.7em}{\rotatebox{90}{Ground truth}}\hspace{0.5em}
	\adjincludegraphics[width=0.93\linewidth,trim={{0.1\width} {0\height} {0.1\width} {0.11\height}}, clip]{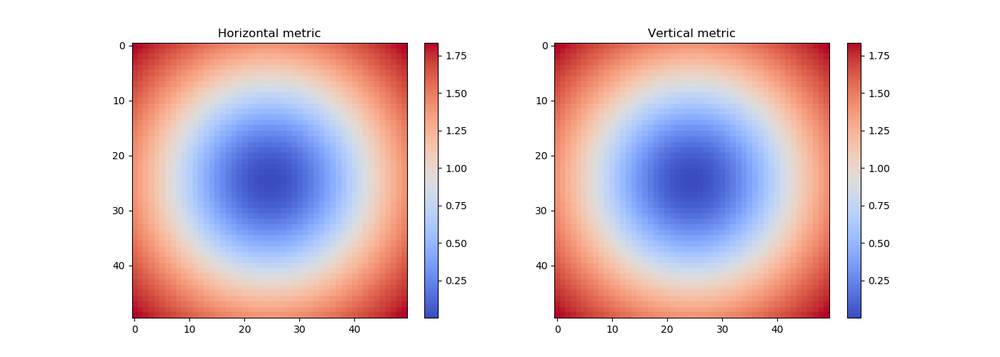} \\
	\vspace{-1em}
	
	\raisebox{1em}{\rotatebox{90}{S4\hspace{2em}S3\hspace{2em}S2\hspace{2em}S1\hspace{2em}}}\hspace{1em}
	\adjincludegraphics[width=0.89\linewidth]{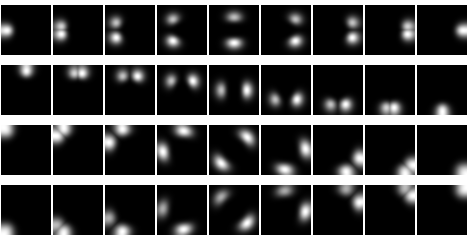} \\
	\vspace{1em}
	
	\raisebox{2.5em}{\rotatebox{90}{Learned on S1}}\hspace{0.5em}
	\adjincludegraphics[width=0.93\linewidth,trim={{0.1\width} {0\height} {0.1\width} {0.11\height}}, clip]{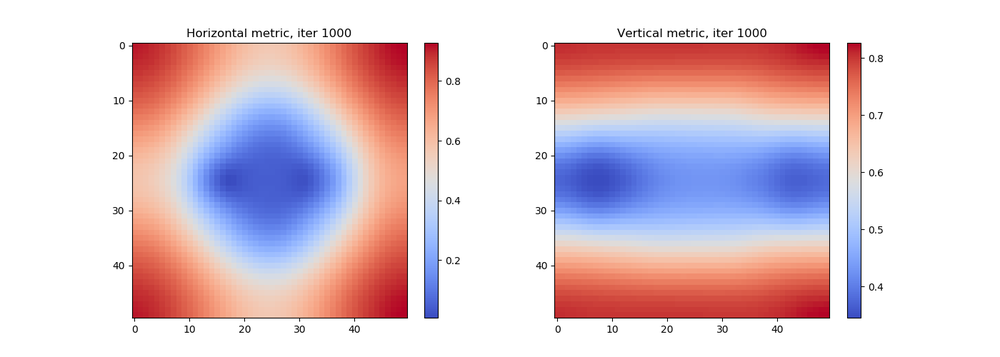} \\
	\raisebox{2.5em}{\rotatebox{90}{Learned on S3}}\hspace{0.5em}
	\adjincludegraphics[width=0.93\linewidth,trim={{0.1\width} {0\height} {0.1\width} {0.11\height}}, clip]{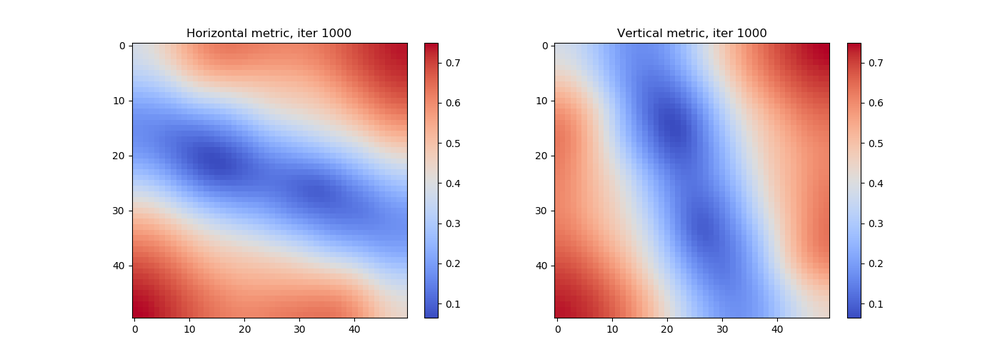} \\
	\raisebox{2.2em}{\rotatebox{90}{Learned on all}}\hspace{0.5em}
	\adjincludegraphics[width=0.93\linewidth,trim={{0\width} {0\height} {0\width} {0.06\height}}, clip]{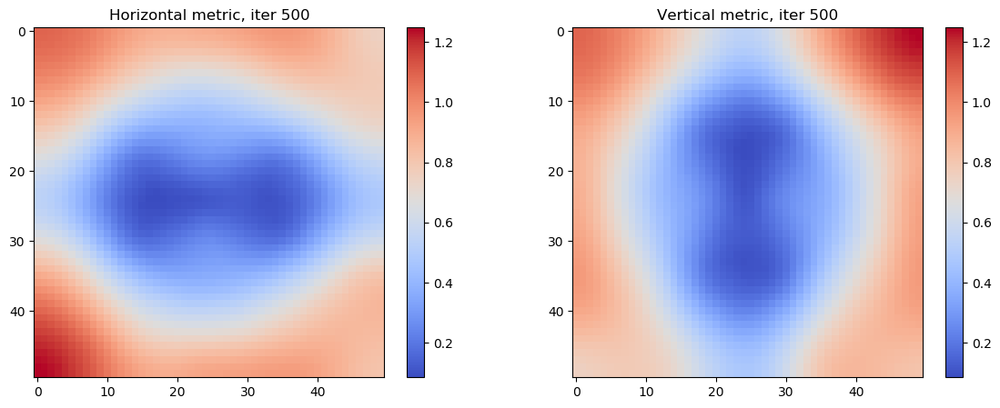}
	\caption{\new{Preliminary experiment of learning on multiple sequences. Using the ground truth metric in the first row, we generate 4 sequences of moving Gaussian distributions. In the last 3 rows, we compare the metric learned: on the first sequence, on the third sequence and on all sequences. Learning on multiple sequences makes the problem better posed, and we see that the recovered metric is closer to the ground truth than when learning with only one sequence.}
	}
	\label{fig:toy_multi}
\end{figure}

\subsection{Evaluation}
\label{sec:evaluation}

\subsubsection{Diffusion equation}
\label{sec:eval_diff_eq}

The parameters $\varepsilon$ and $\sclr{S}$ need to be carefully set for solving the diffusion equation.
Indeed, depending on their value, the formula \eqref{eq:applykernelMk} yields a kernel that is a better or worse approximation of the heat kernel, which directly impacts the accuracy of the displacement interpolations computed with it.
We demonstrate these effects in 2-D, by interpolating between two Dirac masses across a 50x50 image.
We plot the middle slice of the 2-D image as a 1-D function, for easier visualization.
In \autoref{fig:eval_interp}, we plot 10 steps of an interpolation in each subplot, for different values of $\varepsilon$ and $\sclr{S}$, with a Euclidean metric (all metric weights equal to 1).

We observe a trade-off between having sharp interpolations, and having evenly spaced interpolants, which means a constant-speed interpolation.
It is important to note that memory footprint grows almost linearly with $\sclr{S}$ (see next paragraph), since every intermediate vector in \eqref{eq:applykernelMk} is stored for the backward pass.
In practice, we use either $\varepsilon$=4e-2 and $\sclr{S}=20$, or $\varepsilon$=1.2e-3 and $\sclr{S}=50$.
With this level of smoothing, we set the number of Sinkhorn iterations to 50, which is generally enough for the Sinkhorn algorithm to converge. 

\begin{figure}
	\centering
	
	\newcommand{\toplabelspace}{3em}
	\hspace{3em}$\sclr{S}=10$\hspace{\toplabelspace}$\sclr{S}=20$\hspace{\toplabelspace}$\sclr{S}=50$ \hspace{\toplabelspace}$\sclr{S}=100$ 
	\vspace{0.5em}\\
	\rotatebox{90}{\hspace{0.3em}$\varepsilon$=4e-2\hspace{0.6em}$\varepsilon$=1.2e-2\hspace{0.6em}$\varepsilon$=4e-3} %
	\hspace{0.2em}
	\adjincludegraphics[width=0.92\linewidth,trim={{0.\width} {0\height} {0.0\width} {0.25\height}}, clip]{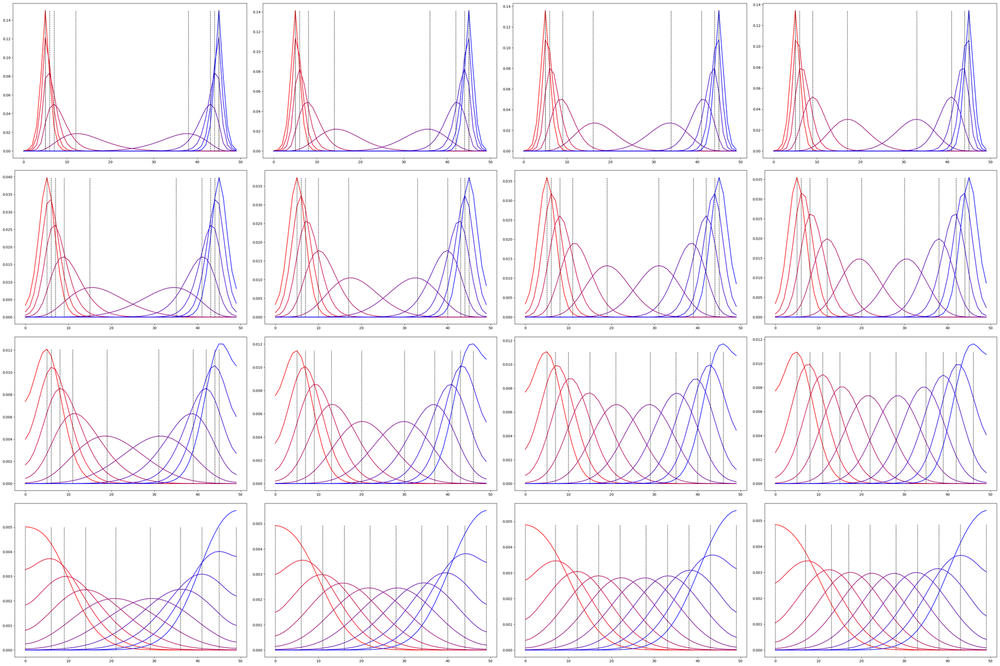}
	
	\caption{Influence of parameters $\varepsilon$ (diffusion time) and $\sclr{S}$ (number of time discretization sub-steps) on displacement interpolation between 2 Dirac masses, computed with 50 Sinkhorn iterations.
		Each plotted line is the 1-D middle slice of a 2-D image.
		Vertical dotted line are also drawn at the maximum of each interpolant, to better visualize their spacing.
		We notice that there is a trade-off between the smoothness of interpolation, and the spacing equality between interpolants.
		An equal spacing translates a constant speed interpolation.}
	\label{fig:eval_interp}
\end{figure}

\subsubsection{Regularization}
\label{sec:eval_regularization}

In order to evaluate the influence of the regularization, we compare the same experiment (the one conducted in \autoref{fig:toyinverseprob2}), with one of the two regularizers ($f_c$ and $f_s$).
The first regularizer $f_c$ effectively stabilizes the values around 1, but the recovered metric is noisy, with patterns that reflect over-fitting.
The second regularizer $f_s$ effectively produces a smooth metric, but we note that the metric values have drawn away from their initial value of 1.
After experimenting with each one, we observed that while reconstruction errors are smaller with $f_c$ (which is another sign of overfitting), the regularizer $f_s$ produces more interpretable results, and allows the global metric scale to shift in order to adapt to the input sequence.
Moreover, combining both generally does not significantly change the result compared to having only $f_s$.
Finally, tuning the $\lambda_s$ parameter allows the user to specify the desired smoothing scale (max spatial frequency) in the final metric.

\begin{figure}
	\centering
	\newcommand{\imhspace}{\dimexpr 0pt}
	\newcommand{\imvspace}{\dimexpr -1pt + 0pt}
	\newcommand{\imwidth}{\dimexpr(0.9\linewidth-\imhspace*0)/1}
	
	Horizontal Metric \hspace{5em} Vertical Metric \vspace{0.5em}\\
	
	\raisebox{2em}{\rotatebox{90}{Regularizer $f_c$}}\hspace{0.5em}
	\adjincludegraphics[width=\imwidth,trim={{0.1\width} {0\height} {0.1\width} {0.11\height}}, clip]{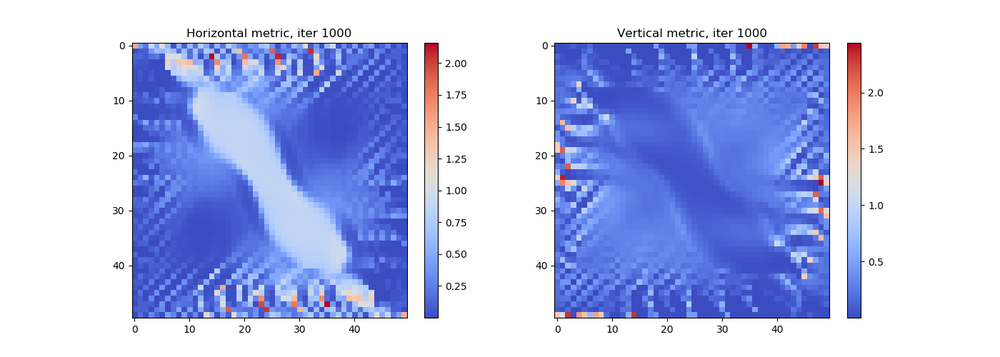} \\
	\vspace{\imvspace}
	\raisebox{2em}{\rotatebox{90}{Regularizer $f_s$}}\hspace{0.5em}
	\adjincludegraphics[width=\imwidth,trim={{0.1\width} {0\height} {0.1\width} {0.11\height}}, clip]{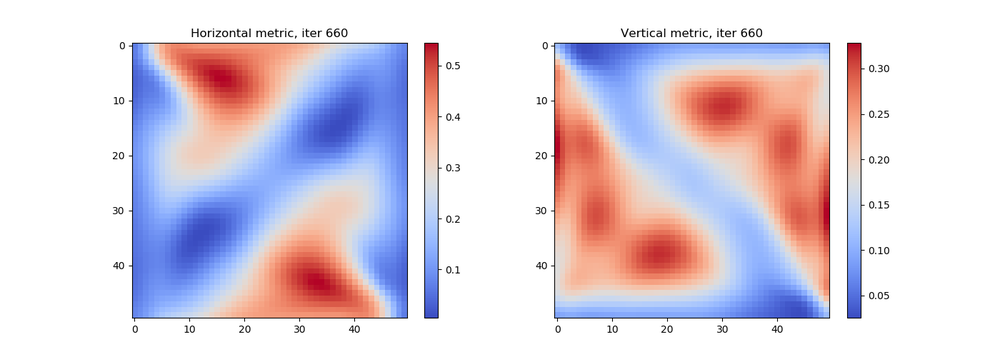} \\

	\caption{In this experiment, we show the effects of each regularizer ($f_c$ and $f_s$) on the final metric, using the experiment presented in \autoref{fig:toyinverseprob2}.
	First row: metric obtained with $\lambda_c = 0.03, \lambda_s = 0$.
	Second row: metric obtained with $\lambda_c = 0, \lambda_s = 0.03$.
	$f_c$ constrains the weights to be close to 1, while $f_s$ constrains them to be spatially smooth.}
	\label{fig:influence_regul}
\end{figure}

\subsubsection{Initialization}
\label{sec:eval_init}

Since the problem we are addressing is non-convex, the initialization of the metric weights is expected to have non-negligible effects on the final result.
In \autoref{fig:influence_init}, we present the end metric of the experiment in \autoref{fig:toyinverseprob2} with $\lambda_s=0.3$, and for three different initializations: (1) constant initialization to 1, (2) random initialization in [0.3,3] uniformly in log scale, and (3) random initialization in [0.1,10] uniformly in log scale.
We observe that the level of noise in (2) does not change the result significantly, but the one in (3) does.
In (2), the initial noise did not impact the final result, because it has been smoothed out by the regularization.
We conclude that the algorithm allows for some noise in the initialization, but a noise level that is too high cannot be smoothed out by the regularizer, and impacts the reconstruction and the final metric significantly.

\begin{figure}
	\centering
	\newcommand{\imhspace}{\dimexpr 0pt}
	\newcommand{\imvspace}{\dimexpr -1pt + 0pt}
	\newcommand{\imwidth}{\dimexpr(0.9\linewidth-\imhspace*0)/1}

	Horizontal Metric \hspace{5em} Vertical Metric \vspace{0.5em}\\
	
	\raisebox{2em}{\rotatebox{90}{Constant = 1}}\hspace{0.5em}
	\adjincludegraphics[width=\imwidth,trim={{0.1\width} {0\height} {0.1\width} {0.11\height}}, clip]{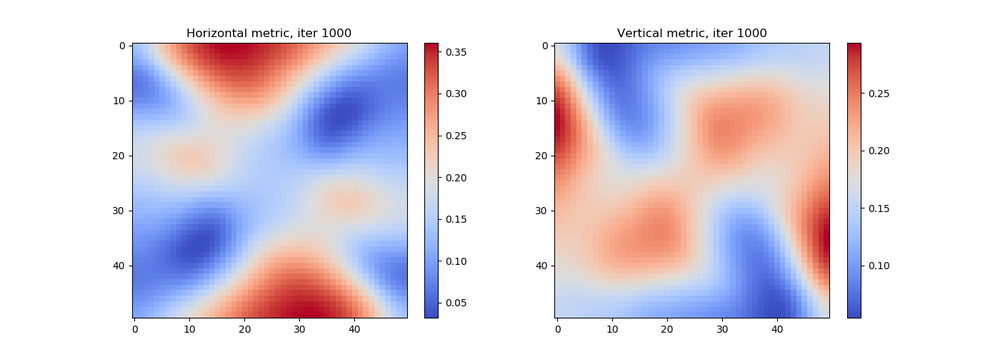} \\
	\vspace{\imvspace}
	\raisebox{1em}{\rotatebox{90}{Random in [0.3,3]}}\hspace{0.5em}
	\adjincludegraphics[width=\imwidth,trim={{0.0\width} {0.0\height} {0.0\width} {0.06\height}}, clip]{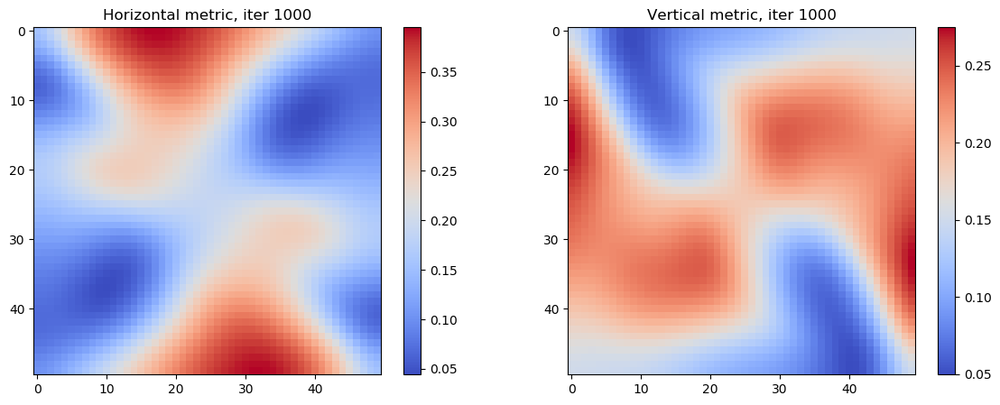} \\
	\vspace{\imvspace}
	\raisebox{1em}{\rotatebox{90}{Random in [0.1,10]}}\hspace{0.5em}
	\adjincludegraphics[width=\imwidth,trim={{0.0\width} {0.0\height} {0.0\width} {0.06\height}}, clip]{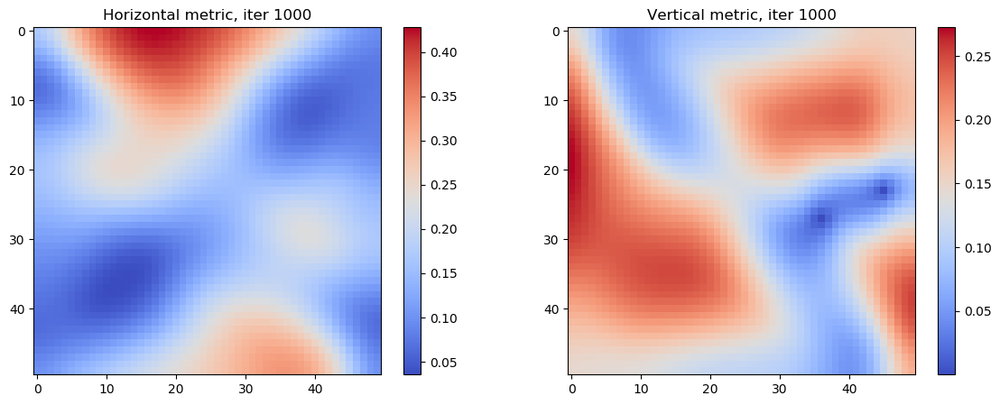} \\
	\vspace{\imvspace}

	\caption{We present the final metric of the experiment in \autoref{fig:toyinverseprob2} for 3 different initializations: constant with weights equal to 1, random in log space in [0.3,3], and random in log space in [0.1,10].
	The algorithm is robust to a half order of magnitude in the metric weights, but not to a full one.}
	\label{fig:influence_init}
\end{figure}

\subsubsection{Loss function}
\label{sec:eval_loss}

The choice of the loss function $\mathcal{L}$ is left to the user, depending on what works best with their application.
In \autoref{fig:influence_loss}, we show three 2-D metrics learned on the synthetic experiment described in \autoref{fig:toyinverseprob2}, using the different loss functions \eqref{eq:lossL1},\eqref{eq:lossL2} and \eqref{eq:lossKL}

\begin{figure}
	\centering
	\newcommand{\imhspace}{\dimexpr 0pt}
	\newcommand{\imvspace}{\dimexpr -1pt + 0pt}
	\newcommand{\imwidth}{\dimexpr(0.93\linewidth-\imhspace*0)/1}
	
	Horizontal Metric \hspace{5em} Vertical Metric \vspace{0.5em}\\
	\raisebox{4em}{\rotatebox{90}{$\mathcal{L}_{2}$ loss}}\hspace{0.5em}
	\adjincludegraphics[width=\imwidth,trim={{0.1\width} {0\height} {0.1\width} {0.11\height}}, clip]{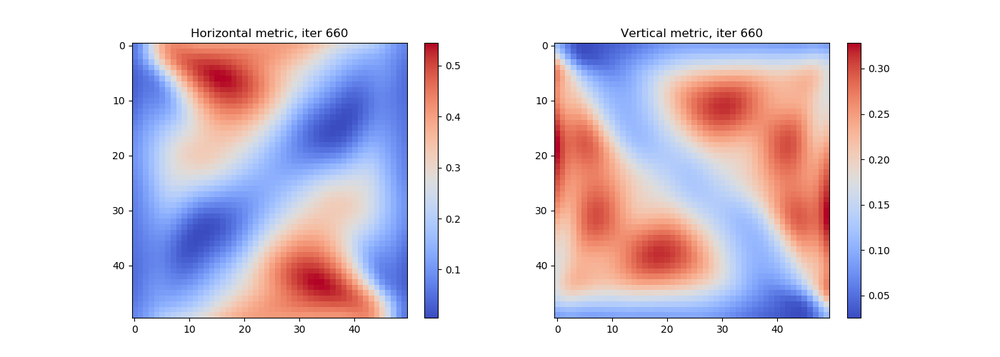} \\
	\vspace{\imvspace}
	\raisebox{4em}{\rotatebox{90}{$\mathcal{L}_{1}$ loss}}\hspace{0.5em}
	\adjincludegraphics[width=\imwidth,trim={{0.1\width} {0\height} {0.1\width} {0.11\height}}, clip]{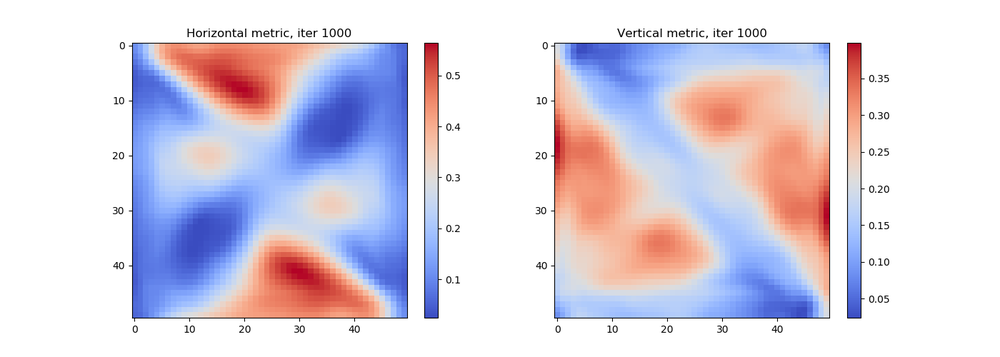} \\
	\vspace{\imvspace}
	\raisebox{4em}{\rotatebox{90}{$\mathcal{L}_{KL}$ loss}}\hspace{0.5em}
	\adjincludegraphics[width=\imwidth,trim={{0.1\width} {0\height} {0.1\width} {0.11\height}}, clip]{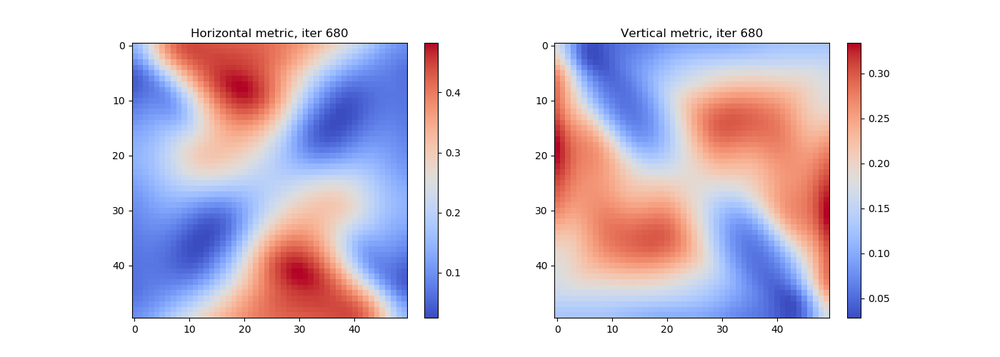} \\

	\caption{The loss function $\mathcal{L}$ influences the resulting metric.
	We present the metric learned during three experiments using the same parameters, but three different loss functions $\mathcal{L}_{2}$, $\mathcal{L}_{1}$ and $\mathcal{L}_{KL}$.
	The experiment is the one described in \autoref{fig:toyinverseprob2}.
	One must choose the loss function depending on the application.}
	\label{fig:influence_loss}
\end{figure}

\subsubsection{Timing and memory}
\label{sec:eval_time_mem}

In Table \ref{tab:timings}, we give the time and memory requirements of our algorithm, depending on the problem parameters.
We use the same type of measures as previously described, that is, those defined on graphs that are $\sclr{d}$-dimensional Cartesian grids.
The problem size is therefore $N=n^d$ and $\sclr{S}$ is the number of sub-steps to solve the heat equation.
The entropic regularization factor $\varepsilon$ (which is used here as a diffusion time) does not affect the runtime.
We give the timings for 500 L-BFGS iterations, which in our use cases, was generally sufficient for the algorithm to converge.

\begin{table}
	\begin{tabular}{cccc|ccc}
		$d$ & $n$ & $N$ & $\sclr{S}$ & $t_{500}(h)$ & Mem.(GB) & Threads \\
		\hline
		2 & 50 & 2500 & 20  & 1.6 & 1.3 & 1 \\
		2 & 50 & 2500 & 100 & 7   & 4.7 & 1 \\
		2 & 100 & 10000 & 20  & 13  & 4   & 1 \\
		2 & 100 & 10000 & 100 & 60  & 16  & 1 \\
		\hline
		3 & 16 & 4096 & 20  & 9   & 1.7 & 1 \\
		3 & 16 & 4096 & 50  & 25  & 3.3 & 1 \\
		3 & 16 & 4096 & 100 & 46  & 6.2 & 1 \\
		3 & 32 & 32768 & 20 & 110 & 10.9 & 8
	\end{tabular}
	\caption{Time and memory requirements of our algorithm, with regard to problem size $N=n^d$ and $\sclr{S}$ the number of sub-steps for solving the heat equation.
	``$t_{500}$'' is the time it takes to run 500 iterations of L-BFGS, expressed in hours (h).
	``Mem.'' is the maximum resident memory that the algorithm requires, and ``Threads'' is the number of threads it runs on.}
	\label{tab:timings}
\end{table}

This algorithm is difficult to parallelize because we need to solve a very large number of medium-size linear systems, which individually do not benefit from multi-threading.
Giving more than one thread to the algorithm was only faster for $N=32^3$.
If instead we parallelize over input images (we generally have around 10), the memory footprint grows 10 times, because the implementation is in Python, which duplicates memory for multi-processing.

\subsection{Learning color evolutions}
\label{sec:colors}

We now demonstrate an application of our algorithm that deals with 3-D color histograms in the RGB color space.
An important question in imaging and learning is which color space to use.
The RGB space is simple to use, but variations in that space do not reflect variations of color perceived by the human eye.
Other spaces, such as L*a*b* or L*u*v* have been designed to counteract this, and match variations in perception and space.
Learning a ground color metric is a way to automatically fit the color space to the problem under consideration.
Note that the problem of color metric learning in psychophysics has a long history, starting with the idea of MacAdam's ellipses~\cite{macadam_visual_1942}, which introduces a Riemannian metric (corresponding to ellipses) to fit perceptual thresholds.

\new{
Given an input sequence of sunset images (for example \autoref{fig:meteora2}), we compute each input's color histogram, and use our algorithm to learn the metric for which the histogram sequence resembles an optimal transport of mass.
All sequences presented hereafter contain around ten frames, but we only show five of them for brevity.
Our final goal is to create a new sunset sequence from a new pair of day/night images, by interpolating between them using the learned metric, and transferring the interpolated color histograms onto the day image.
Transferring the colors of an image (or of a color histogram) onto another image can be done via regularized OT, since the transport plan defines a mapping between the two histograms.
This transfer is performed using the \emph{barycentric projection map} followed by a bilateral filtering of the resulting image.
The reader can refer to \citet{peyre_computational_2018} and \citet{solomon_convolutional_2015} for more details.

\subsubsection{Validating a learned metric}
\label{sec:colors_validate}

Before we apply a learned metric to a new dataset, we verify if it performs well on the training dataset.
We first learn a metric from the \emph{meteora2} sequence (\autoref{fig:meteora2}), and visualize the reconstruction of the input histogram sequence, at the end of the metric learning process (\autoref{fig:met2COOI500_rec}).

We then check if the histograms reconstructed using the learned metric (\autoref{fig:met2COOI500_rec} second row) yield an image sequence that is close to the original one.
Therefore, instead of transferring these histograms on a new day image, we transfer them on the first frame of the same sequence.
In \autoref{fig:met2_met2_check1}, we compare this image sequence (fourth row) with two other methods of interpolation between the first and last frame, and a direct color transfer (no interpolation).
The first row is the \emph{meteora2} dataset (which is the ground truth), the second row is computed with a linear interpolation, and the third row is computed through a displacement interpolation with a Euclidean metric.
Finally, the last row shows a direct color transfer of each frame of \emph{meteora2} on its first frame.

Even though the differences are subtle, we can see that with our method of interpolation (using the learned metric), the colors (especially the red/orange tone) are closer to the ground truth and the direct transfer, than with a linear or Euclidean OT interpolation.

}

\subsubsection{Reusing the learned metric}
\label{sec:colors_newsunset}

We now create a new sunset sequence from a pair of new day/night images, as described earlier.
The image pair is extracted from the \emph{country1} dataset (\autoref{fig:quincy1}), where we take the first and the last frame.
We first learn a metric on the \emph{seldovia2} dataset (\autoref{fig:seldovia2}), with histograms of size $16^3$, the $\mathcal{L}_2$ loss, 50 Sinkhorn iterations, 500 L-BFGS iterations, an entropic regularization of $\varepsilon = 4e-3$, $\sclr{S}=20$ sub-steps, and a metric regularizer parameter $\lambda_s=1$.
Next, we interpolate between the day and night histograms of the \emph{country1} dataset, using the learned metric, which is upsampled to $31^3$ in order to decrease color quantization errors.
Finally, we transfer each interpolated color histograms on the day frame to reproduce a sunset sequence.

\new{
In \autoref{fig:quincy1_sel2F0016I500_hists}, we compare the histogram sequence interpolated using the learned metric (3rd row), with a linear interpolation (1st row), and a displacement interpolation with a Euclidean metric (2nd row).
In rows 2-4 of \autoref{fig:quincy1_sel2F0016I500_images}, we show the color transfer of each interpolated histogram of \autoref{fig:quincy1_sel2F0016I500_hists} onto the day frame.
In the first row, we show the \emph{country1} dataset, which constitutes a ground truth, and in row 5, we compare the results with a direct transfer of the \emph{seldovia2} dataset on the day image of the \emph{country1} dataset.

With linear interpolation, mass does not move. This results in color artifacts in our examples, such as clouds remaining bright and white until the very end of the sequence.

With the displacement interpolation with a Euclidean metric, mass travels in straight line between the first and last frame.
We can see on the histogram sequence that mass travels close to the diagonal of the histogram, which represents the grey levels.
Therefore, the resulting image sequence shows no red/orange tones that are typical of a sunset.

With our method, mass does not travel in straight line, by virtue of the learned metric. 
We can see on the histogram sequence that mass bypasses the diagonal of greys, and therefore travels in the red/orange tones, which results in an image sequence that is much closer to the ground truth.
However, we also notice that the mass is split into two packages, one going through the red region, and the other one through the blue region, resulting in the sky getting bluer in the image sequence.
This is an artifact that can be explained by the relatively high diffusion level required to have equally spaced interpolations, as explained in \autoref{fig:eval_interp}.

}

Finally, a direct transfer also gives a plausible sunset sequence, however, the original colors of the target dataset (\emph{country1}) are not preserved.
Moreover, our method allows interpolation with an arbitrary number of frames, whereas the direct transfer can only produce the number of frames available in the source dataset.

\begin{figure*}
	\centering
	\newcommand{\imhspace}{\dimexpr 3pt}
	\newcommand{\imvspace}{\dimexpr -1pt + 3pt}
	\newcommand{\imwidth}{\dimexpr(0.99\linewidth-\imhspace*4)/5}

	\adjincludegraphics[width=\imwidth]{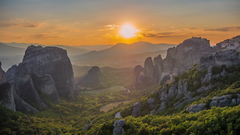}\hspace{\imhspace}
	\adjincludegraphics[width=\imwidth]{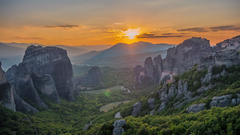}\hspace{\imhspace}
	\adjincludegraphics[width=\imwidth]{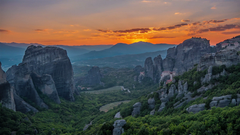}\hspace{\imhspace}
	\adjincludegraphics[width=\imwidth]{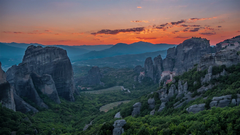}\hspace{\imhspace}
	\adjincludegraphics[width=\imwidth]{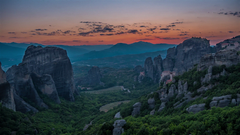} \\
	\vspace{\imvspace}
	\adjincludegraphics[width=\imwidth,trim={{0.125\width} {0\height} {0\width} {0.098\height}}, clip]{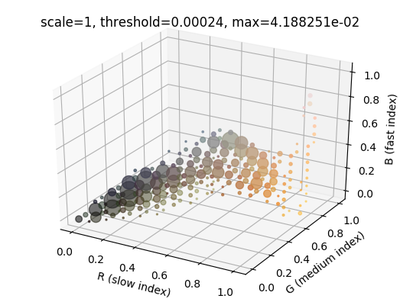}\hspace{\imhspace}
	\adjincludegraphics[width=\imwidth,trim={{0.125\width} {0\height} {0\width} {0.098\height}}, clip]{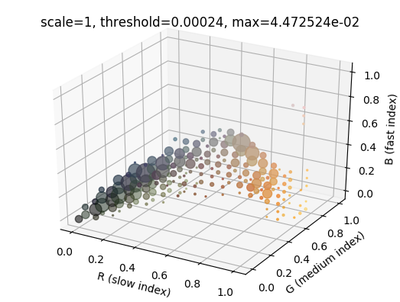}\hspace{\imhspace}
	\adjincludegraphics[width=\imwidth,trim={{0.125\width} {0\height} {0\width} {0.098\height}}, clip]{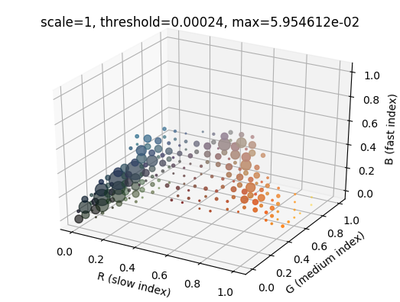}\hspace{\imhspace}
	\adjincludegraphics[width=\imwidth,trim={{0.125\width} {0\height} {0\width} {0.098\height}}, clip]{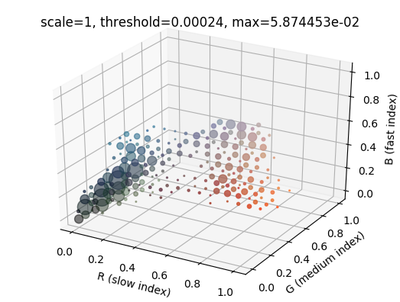}\hspace{\imhspace}
	\adjincludegraphics[width=\imwidth,trim={{0.125\width} {0\height} {0\width} {0.098\height}}, clip]{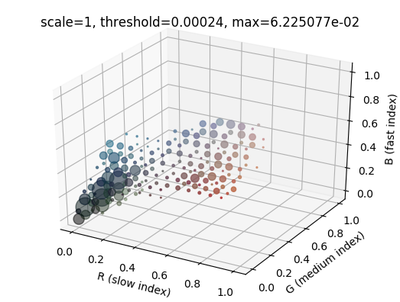} \\

	\caption{The \emph{meteora2} dataset: images (first row) and color histograms (second row).
	Video courtesy of \href{https://www.youtube.com/watch?v=tijJCrDh860}{PG ViSUAL}}
	\label{fig:meteora2}
\end{figure*}

\begin{figure*}
	\centering
	\newcommand{\imhspace}{\dimexpr 0pt}
	\newcommand{\imvspace}{\dimexpr -1pt + 0pt}
	\newcommand{\imwidth}{\dimexpr(0.99\linewidth-\imhspace*4)/5}

	\vspace{\imvspace}
	\adjincludegraphics[width=\imwidth,trim={{0.125\width} {0\height} {0\width} {0.098\height}}, clip]{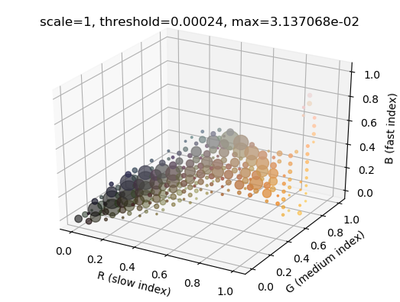}\hspace{\imhspace}
	\adjincludegraphics[width=\imwidth,trim={{0.125\width} {0\height} {0\width} {0.098\height}}, clip]{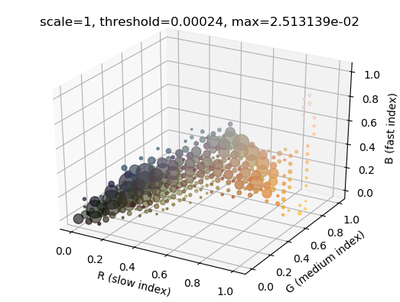}\hspace{\imhspace}
	\adjincludegraphics[width=\imwidth,trim={{0.125\width} {0\height} {0\width} {0.098\height}}, clip]{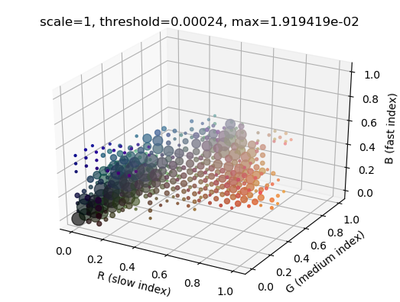}\hspace{\imhspace}
	\adjincludegraphics[width=\imwidth,trim={{0.125\width} {0\height} {0\width} {0.098\height}}, clip]{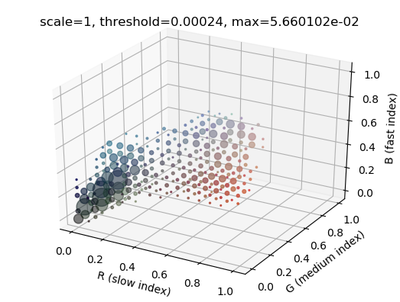}\hspace{\imhspace}
	\adjincludegraphics[width=\imwidth,trim={{0.125\width} {0\height} {0\width} {0.098\height}}, clip]{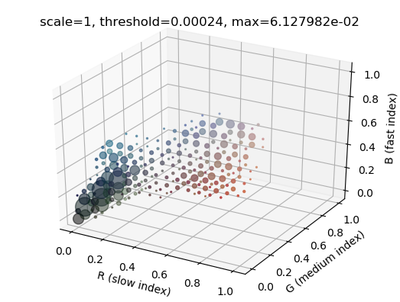} \\

	\caption{Reconstruction of the color histogram sequence of the \emph{meteora2} dataset (\autoref{fig:meteora2}), using the metric learned from it, after 500 iterations of our algorithm. 
	The ground truth is in the second row of \autoref{fig:meteora2}.
	We notice that the reconstructions are more diffuse than the inputs, due to the entropic regularization.
	}
	\label{fig:met2COOI500_rec}
\end{figure*}

\begin{figure*}
	\centering
	\newcommand{\imhspace}{\dimexpr 3pt}
	\newcommand{\imvspace}{\dimexpr -1pt + 3pt}
	\newcommand{\imwidth}{\dimexpr(0.95\linewidth-\imhspace*4)/5}
	
	\raisebox{-0.25em}{\rotatebox{90}{Ground Truth}}\hspace{0.5em}
	\adjincludegraphics[width=\imwidth]{figures/color_transfers/meteora2/red25-video_000.png}\hspace{\imhspace}
	\adjincludegraphics[width=\imwidth]{figures/color_transfers/meteora2/red25-video_002.png}\hspace{\imhspace}
	\adjincludegraphics[width=\imwidth]{figures/color_transfers/meteora2/red25-video_004.png}\hspace{\imhspace}
	\adjincludegraphics[width=\imwidth]{figures/color_transfers/meteora2/red25-video_006.png}\hspace{\imhspace}
	\adjincludegraphics[width=\imwidth]{figures/color_transfers/meteora2/red25-video_008.png} \\
	\vspace{4\imvspace}
	\raisebox{1.5em}{\rotatebox{90}{Linear}}\hspace{0.5em}
	\adjincludegraphics[width=\imwidth]{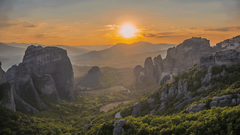}\hspace{\imhspace}
	\adjincludegraphics[width=\imwidth]{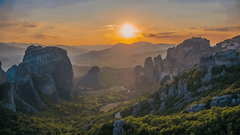}\hspace{\imhspace}
	\adjincludegraphics[width=\imwidth]{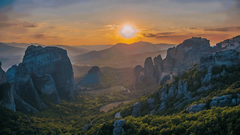}\hspace{\imhspace}
	\adjincludegraphics[width=\imwidth]{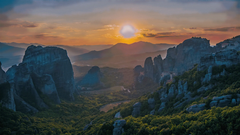}\hspace{\imhspace}
	\adjincludegraphics[width=\imwidth]{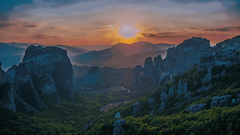} \\
	\vspace{\imvspace}
	\raisebox{0em}{\rotatebox{90}{Euclidean OT}}\hspace{0.5em}
	\adjincludegraphics[width=\imwidth]{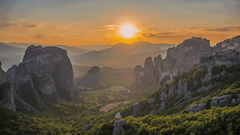}\hspace{\imhspace}
	\adjincludegraphics[width=\imwidth]{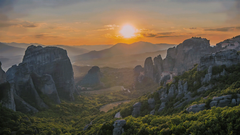}\hspace{\imhspace}
	\adjincludegraphics[width=\imwidth]{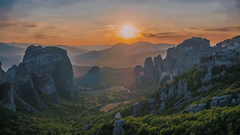}\hspace{\imhspace}
	\adjincludegraphics[width=\imwidth]{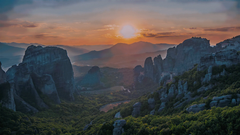}\hspace{\imhspace}
	\adjincludegraphics[width=\imwidth]{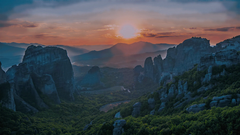} \\
	\vspace{\imvspace}
	\raisebox{0em}{\rotatebox{90}{Our method}}\hspace{0.5em}
	\adjincludegraphics[width=\imwidth]{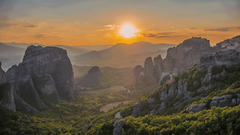}\hspace{\imhspace}
	\adjincludegraphics[width=\imwidth]{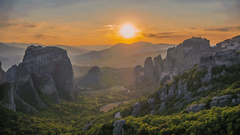}\hspace{\imhspace}
	\adjincludegraphics[width=\imwidth]{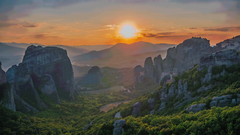}\hspace{\imhspace}
	\adjincludegraphics[width=\imwidth]{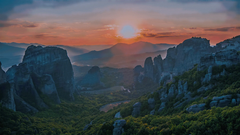}\hspace{\imhspace}
	\adjincludegraphics[width=\imwidth]{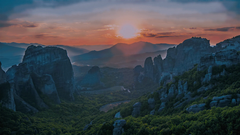} \\
	\vspace{4\imvspace}
	\raisebox{-0.25em}{\rotatebox{90}{Direct transfer}}\hspace{0.5em}
	\adjincludegraphics[width=\imwidth]{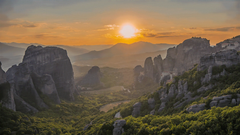}\hspace{\imhspace}
	\adjincludegraphics[width=\imwidth]{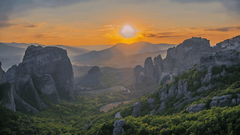}\hspace{\imhspace}
	\adjincludegraphics[width=\imwidth]{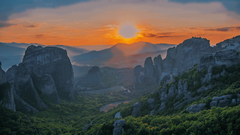}\hspace{\imhspace}
	\adjincludegraphics[width=\imwidth]{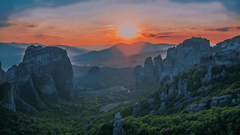}\hspace{\imhspace}
	\adjincludegraphics[width=\imwidth]{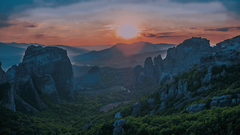} \\

	\caption{Preliminary experiment: we learn a metric on the \emph{meteora2} dataset (top row), then reinterpolate color histograms between the first and last frames using different methods, and transfer each interpolated histogram unto the first frame. 
	The interpolation methods are: linear interpolation (second row), displacement interpolation with a Euclidean metric (third row), displacement interpolation with the learned metric (fourth row). 
	The last row does not involve interpolation and is simply the color transfer of each frame of the ground truth onto the first frame.}
	\label{fig:met2_met2_check1}
\end{figure*}

\begin{figure*}
	\centering
	\newcommand{\imhspace}{\dimexpr 3pt}
	\newcommand{\imvspace}{\dimexpr -1pt + 0pt}
	\newcommand{\imwidth}{\dimexpr(0.99\linewidth-\imhspace*4)/5}
	
	\adjincludegraphics[width=\imwidth]{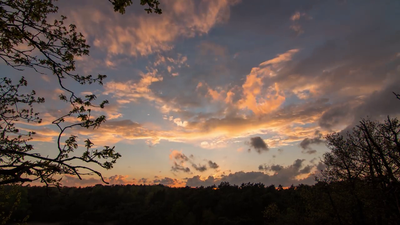}\hspace{\imhspace}
	\adjincludegraphics[width=\imwidth]{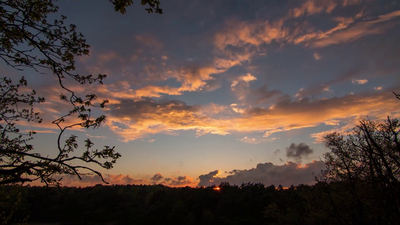}\hspace{\imhspace}
	\adjincludegraphics[width=\imwidth]{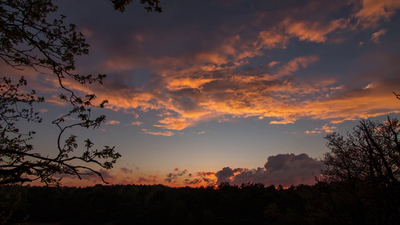}\hspace{\imhspace}
	\adjincludegraphics[width=\imwidth]{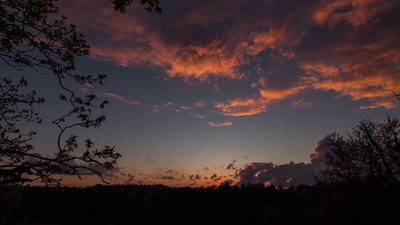}\hspace{\imhspace}
	\adjincludegraphics[width=\imwidth]{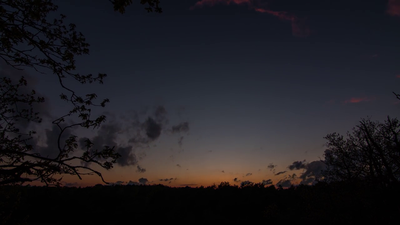} \\
	\vspace{\imvspace}
	\adjincludegraphics[width=\imwidth,trim={{0.125\width} {0\height} {0\width} {0.09\height}}, clip]{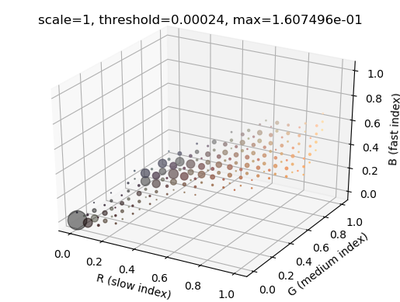}\hspace{\imhspace}
	\adjincludegraphics[width=\imwidth,trim={{0.125\width} {0\height} {0\width} {0.09\height}}, clip]{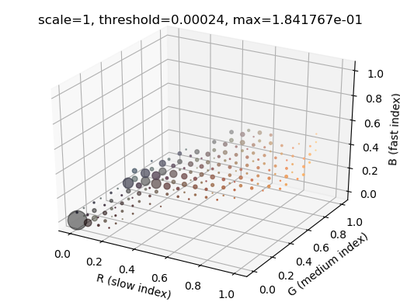}\hspace{\imhspace}
	\adjincludegraphics[width=\imwidth,trim={{0.125\width} {0\height} {0\width} {0.09\height}}, clip]{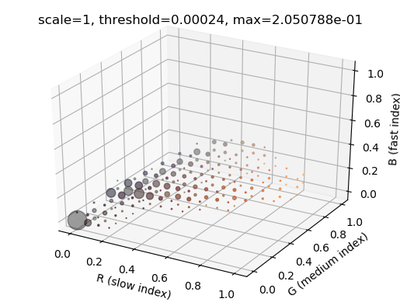}\hspace{\imhspace}
	\adjincludegraphics[width=\imwidth,trim={{0.125\width} {0\height} {0\width} {0.09\height}}, clip]{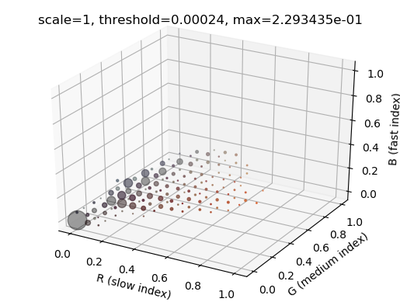}\hspace{\imhspace}
	\adjincludegraphics[width=\imwidth,trim={{0.125\width} {0\height} {0\width} {0.09\height}}, clip]{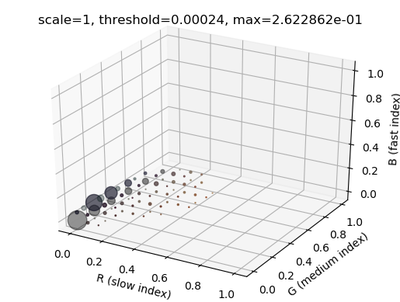} \\
	
	\caption{The \emph{country1} dataset: images (first row) and color histograms (second row).
	Video courtesy of \href{https://www.youtube.com/watch?v=cf6l7DqFzZI}{Quincy van den Boom}}
	\label{fig:quincy1}
\end{figure*}

\begin{figure*}
	\centering
	\newcommand{\imhspace}{\dimexpr 3pt}
	\newcommand{\imvspace}{\dimexpr -1pt + 0pt}
	\newcommand{\imwidth}{\dimexpr(0.99\linewidth-\imhspace*4)/5}
	
	\adjincludegraphics[width=\imwidth]{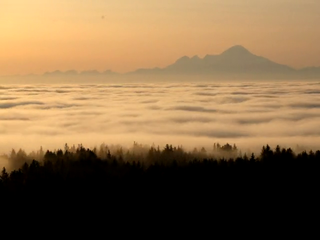}\hspace{\imhspace}
	\adjincludegraphics[width=\imwidth]{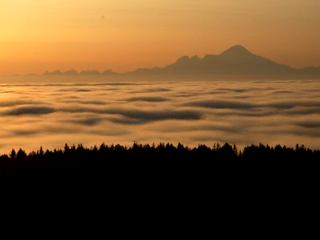}\hspace{\imhspace}
	\adjincludegraphics[width=\imwidth]{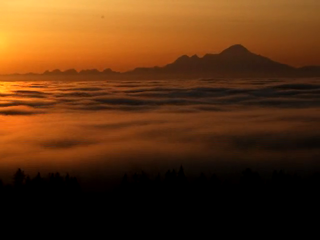}\hspace{\imhspace}
	\adjincludegraphics[width=\imwidth]{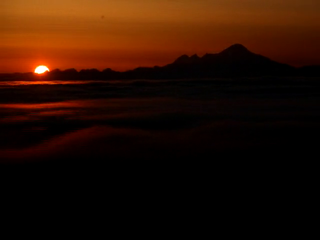}\hspace{\imhspace}
	\adjincludegraphics[width=\imwidth]{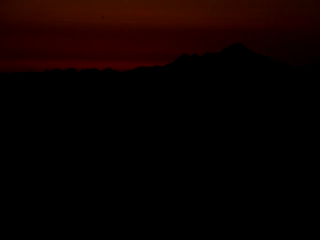} \\
	\vspace{\imvspace}
	\adjincludegraphics[width=\imwidth,trim={{0.125\width} {0\height} {0\width} {0.09\height}}, clip]{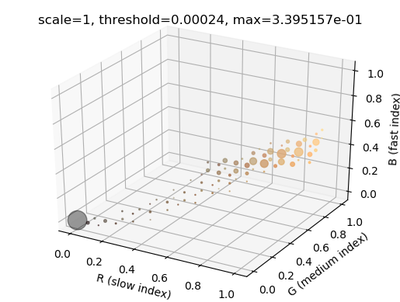}\hspace{\imhspace}
	\adjincludegraphics[width=\imwidth,trim={{0.125\width} {0\height} {0\width} {0.09\height}}, clip]{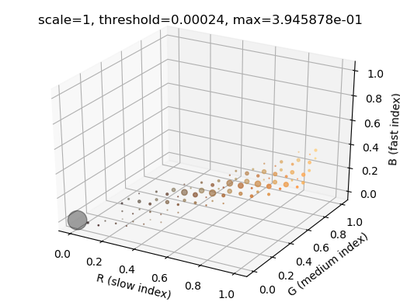}\hspace{\imhspace}
	\adjincludegraphics[width=\imwidth,trim={{0.125\width} {0\height} {0\width} {0.09\height}}, clip]{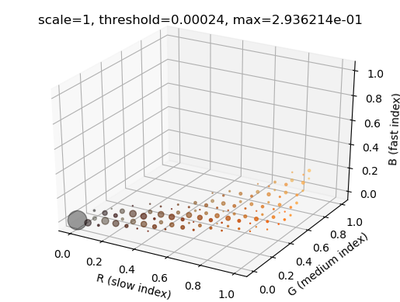}\hspace{\imhspace}
	\adjincludegraphics[width=\imwidth,trim={{0.125\width} {0\height} {0\width} {0.09\height}}, clip]{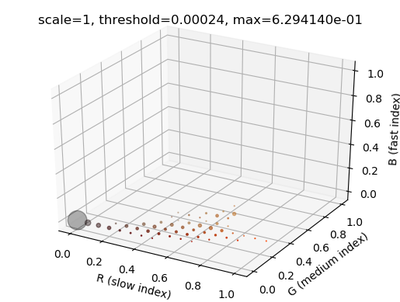}\hspace{\imhspace}
	\adjincludegraphics[width=\imwidth,trim={{0.125\width} {0\height} {0\width} {0.09\height}}, clip]{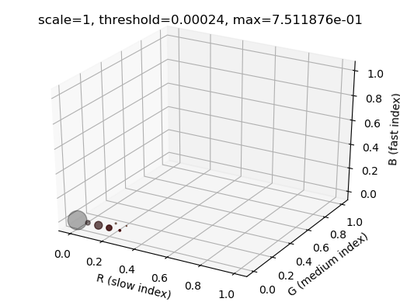} \\
	
	\caption{The \emph{seldovia2} dataset: images (first row) and color histograms (second row).
	Video courtesy of \href{https://vimeo.com/4441546}{Bretwood Higman}}
	\label{fig:seldovia2}
\end{figure*}

\begin{figure*}
	\centering
	\newcommand{\imhspace}{\dimexpr 0pt}
	\newcommand{\imvspace}{\dimexpr -1pt + 0pt}
	\newcommand{\imwidth}{\dimexpr(0.95\linewidth-\imhspace*4)/5}

	\raisebox{3em}{\rotatebox{90}{Linear}}\hspace{0.5em}
	\adjincludegraphics[width=\imwidth,trim={{0.1\width} {0.0\height} {0.0\width} {0.09\height}}, clip]{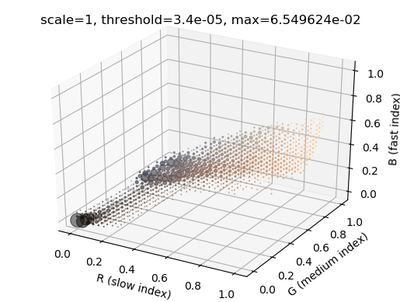}\hspace{\imhspace}
	\adjincludegraphics[width=\imwidth,trim={{0.1\width} {0.0\height} {0.0\width} {0.09\height}}, clip]{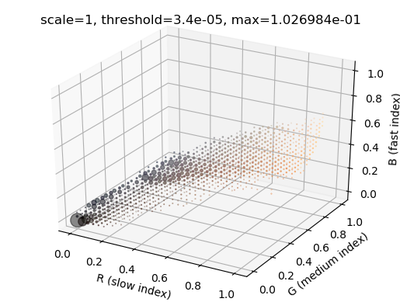}\hspace{\imhspace}
	\adjincludegraphics[width=\imwidth,trim={{0.1\width} {0.0\height} {0.0\width} {0.09\height}}, clip]{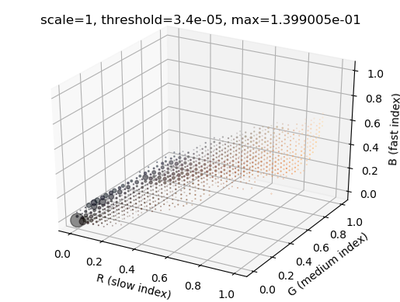}\hspace{\imhspace}
	\adjincludegraphics[width=\imwidth,trim={{0.1\width} {0.0\height} {0.0\width} {0.09\height}}, clip]{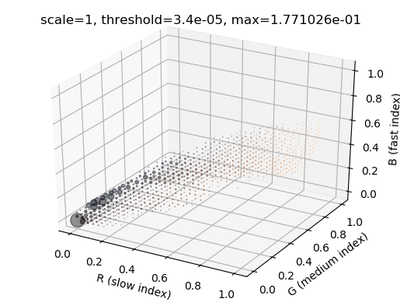}\hspace{\imhspace}
	\adjincludegraphics[width=\imwidth,trim={{0.1\width} {0.0\height} {0.0\width} {0.09\height}}, clip]{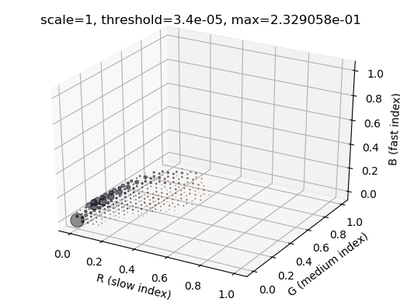} \\
	\vspace{\imvspace}
	\raisebox{1em}{\rotatebox{90}{Euclidean OT}}\hspace{0.5em}
	\adjincludegraphics[width=\imwidth,trim={{0.1\width} {0.0\height} {0.0\width} {0.09\height}}, clip]{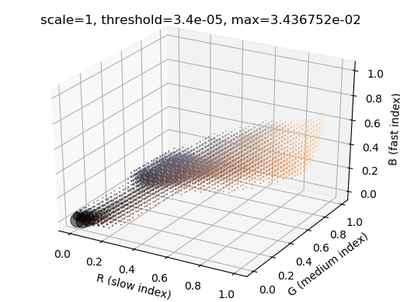}\hspace{\imhspace}
	\adjincludegraphics[width=\imwidth,trim={{0.1\width} {0.0\height} {0.0\width} {0.09\height}}, clip]{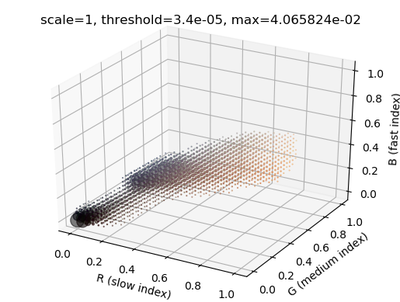}\hspace{\imhspace}
	\adjincludegraphics[width=\imwidth,trim={{0.1\width} {0.0\height} {0.0\width} {0.09\height}}, clip]{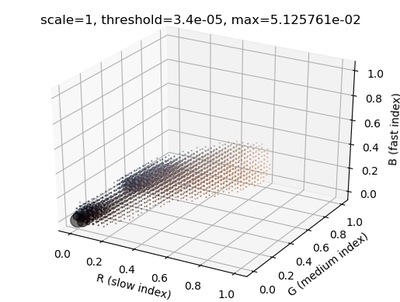}\hspace{\imhspace}
	\adjincludegraphics[width=\imwidth,trim={{0.1\width} {0.0\height} {0.0\width} {0.09\height}}, clip]{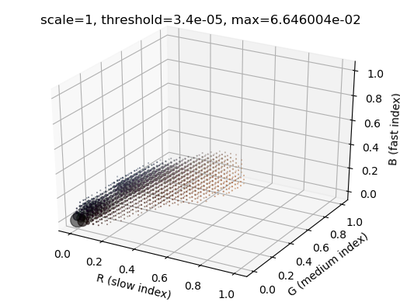}\hspace{\imhspace}
	\adjincludegraphics[width=\imwidth,trim={{0.1\width} {0.0\height} {0.0\width} {0.09\height}}, clip]{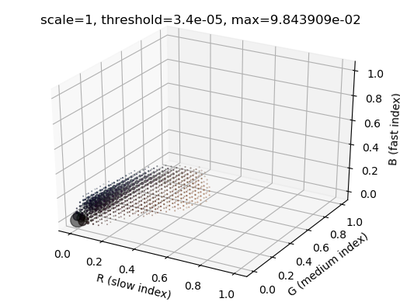} \\
	\vspace{\imvspace}
	\raisebox{1em}{\rotatebox{90}{Our method}}\hspace{0.5em}
	\adjincludegraphics[width=\imwidth,trim={{0.1\width} {0.0\height} {0.0\width} {0.09\height}}, clip]{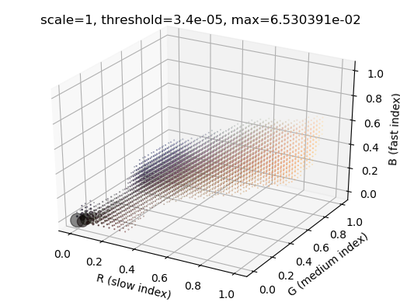}\hspace{\imhspace}
	\adjincludegraphics[width=\imwidth,trim={{0.1\width} {0.0\height} {0.0\width} {0.09\height}}, clip]{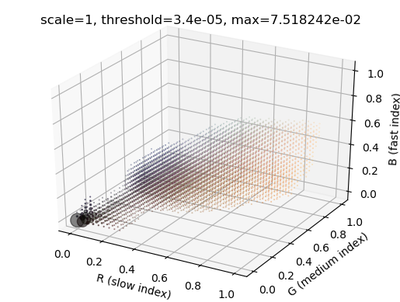}\hspace{\imhspace}
	\adjincludegraphics[width=\imwidth,trim={{0.1\width} {0.0\height} {0.0\width} {0.09\height}}, clip]{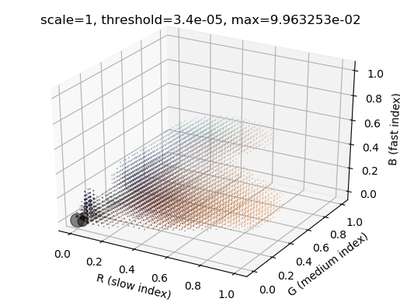}\hspace{\imhspace}
	\adjincludegraphics[width=\imwidth,trim={{0.1\width} {0.0\height} {0.0\width} {0.09\height}}, clip]{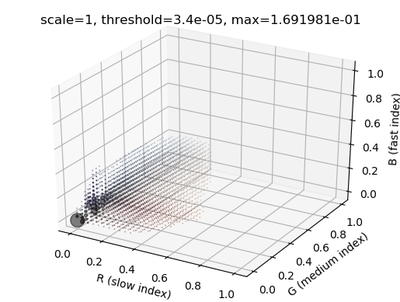}\hspace{\imhspace}
	\adjincludegraphics[width=\imwidth,trim={{0.1\width} {0.0\height} {0.0\width} {0.09\height}}, clip]{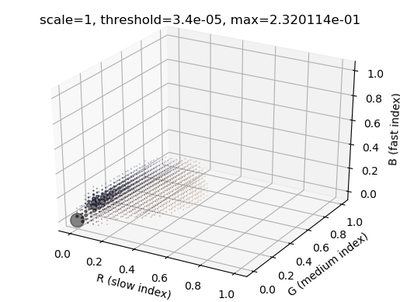} \\
	
	\caption{Interpolation between day and night histograms of the \emph{country1} dataset (\autoref{fig:quincy1}) using: linear interpolation (1st row), OT with Euclidean metric (2nd row) and OT with the metric learned on the \emph{seldovia2} (\autoref{fig:seldovia2}) dataset (3rd row).
	Color transfers using these histograms are presented in \autoref{fig:quincy1_sel2F0016I500_images}.}
	\label{fig:quincy1_sel2F0016I500_hists}

\end{figure*}

\begin{figure*}
	\centering
	\newcommand{\imhspace}{\dimexpr 2pt}
	\newcommand{\imvspace}{\dimexpr -1pt + 4pt}
	\newcommand{\imwidth}{\dimexpr(0.95\linewidth-\imhspace*4)/5}
	
	\raisebox{0em}{\rotatebox{90}{Ground truth}}\hspace{0.5em}
	\adjincludegraphics[width=\imwidth]{figures/color_transfers/quincy1/red50-video29.png}\hspace{\imhspace}
	\adjincludegraphics[width=\imwidth]{figures/color_transfers/quincy1/red50-video31.png}\hspace{\imhspace}
	\adjincludegraphics[width=\imwidth]{figures/color_transfers/quincy1/red50-video33.png}\hspace{\imhspace}
	\adjincludegraphics[width=\imwidth]{figures/color_transfers/quincy1/red50-video35.png}\hspace{\imhspace}
	\adjincludegraphics[width=\imwidth]{figures/color_transfers/quincy1/red50-video38.png} \\
	\vspace{4\imvspace}
	\raisebox{1em}{\rotatebox{90}{Linear}}\hspace{0.5em}
    \adjincludegraphics[width=\imwidth]{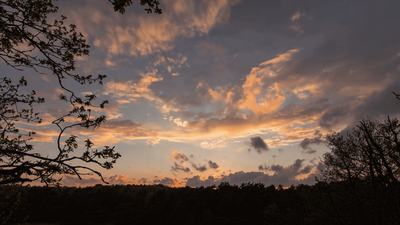}\hspace{\imhspace}
    \adjincludegraphics[width=\imwidth]{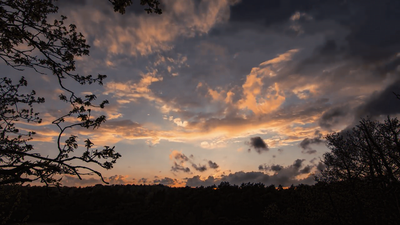}\hspace{\imhspace}
    \adjincludegraphics[width=\imwidth]{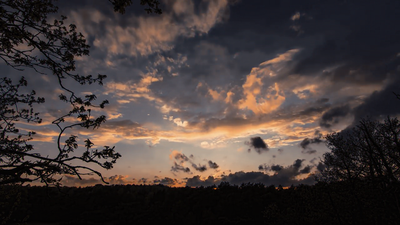}\hspace{\imhspace}
    \adjincludegraphics[width=\imwidth]{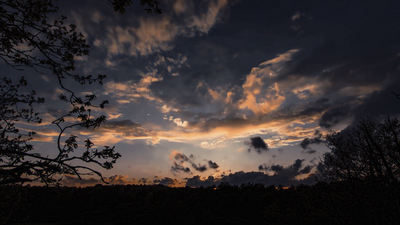}\hspace{\imhspace}
    \adjincludegraphics[width=\imwidth]{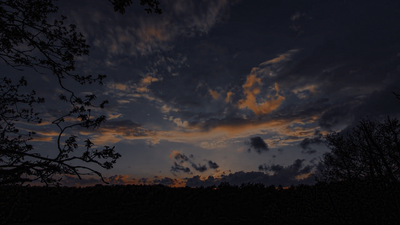} \\
    \vspace{\imvspace}
   	\raisebox{0.5em}{\rotatebox{90}{Euclid. OT}}\hspace{0.5em}
    \adjincludegraphics[width=\imwidth]{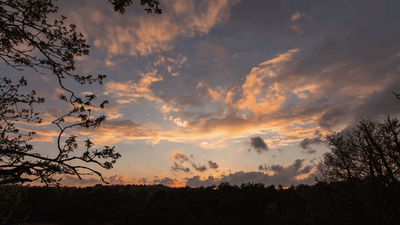}\hspace{\imhspace}
    \adjincludegraphics[width=\imwidth]{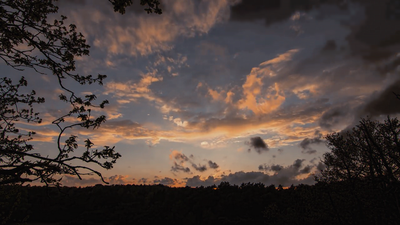}\hspace{\imhspace}
    \adjincludegraphics[width=\imwidth]{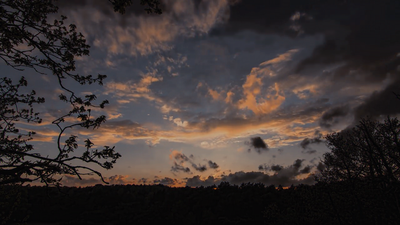}\hspace{\imhspace}
    \adjincludegraphics[width=\imwidth]{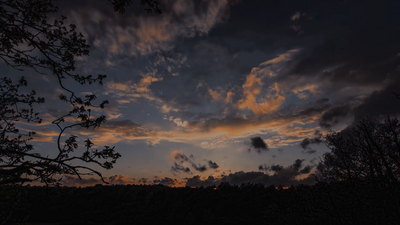}\hspace{\imhspace}
    \adjincludegraphics[width=\imwidth]{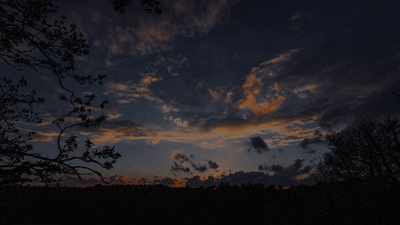} \\
    \vspace{\imvspace}
	\raisebox{0em}{\rotatebox{90}{Our method}}\hspace{0.5em}
    \adjincludegraphics[width=\imwidth]{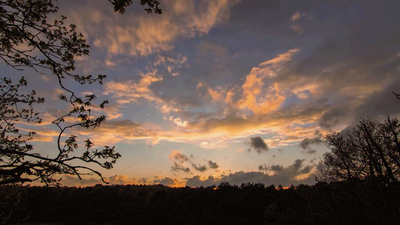}\hspace{\imhspace}
    \adjincludegraphics[width=\imwidth]{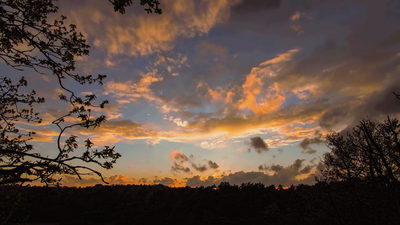}\hspace{\imhspace}
    \adjincludegraphics[width=\imwidth]{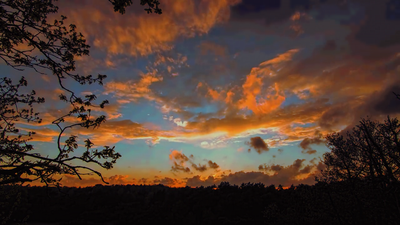}\hspace{\imhspace}
    \adjincludegraphics[width=\imwidth]{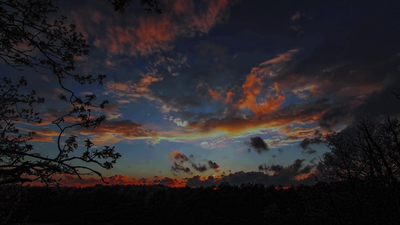}\hspace{\imhspace}
    \adjincludegraphics[width=\imwidth]{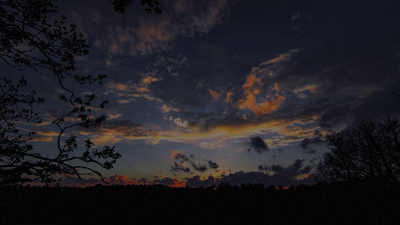} \\
   	\vspace{3\imvspace}
	\raisebox{0em}{\rotatebox{90}{Direct transfer}}\hspace{0.5em}
	\adjincludegraphics[width=\imwidth]{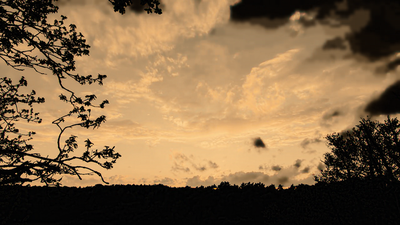}\hspace{\imhspace}
	\adjincludegraphics[width=\imwidth]{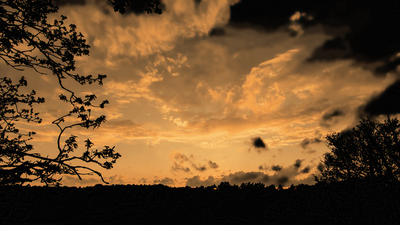}\hspace{\imhspace}
	\adjincludegraphics[width=\imwidth]{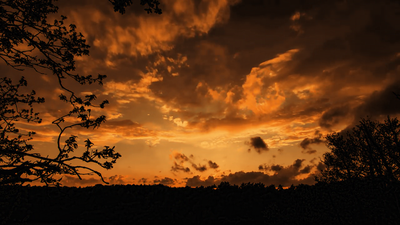}\hspace{\imhspace}
	\adjincludegraphics[width=\imwidth]{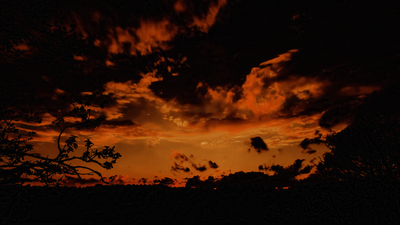}\hspace{\imhspace}
	\adjincludegraphics[width=\imwidth]{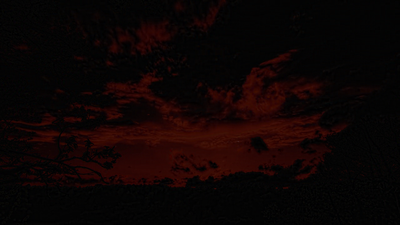} \\
	
	\caption{1st row: ground truth, the \emph{country1} dataset.
	Rows 2-4: color transfer of each interpolated histograms in \autoref{fig:quincy1_sel2F0016I500_hists}.
	When comparing with the ground truth, we see that our method recreates sunset-like colors, as opposed to the two other methods.
	Row 5: direct transfer of each frame of the \emph{seldovia2} dataset on the first frame of the \emph{country1} dataset.
	Our method is able to preserve the original colors of the day image, contrary to a direct transfer.}
	\label{fig:quincy1_sel2F0016I500_images}
\end{figure*}

\section{Discussion}
\label{sec:discussion}

The problem we tackle is ill-posed and in general there is no way to find information where mass does not travel.
Nevertheless, our regularization of the problem reduces the number of local minima and reduces the non-convexity by imposing spatially smooth metric weights, which also avoids over-fitting.

The parameterization we chose for the metric is limited in the sense that it only includes diffusion along the grid axes, which leads to low-precision approximations of the heat kernel on small domains.
This approximation affects the quality of the displacement interpolations, as seen in \autoref{fig:eval_interp}.
It leads to a trade-off when choosing $\varepsilon$, between the smoothness of the interpolations, the regularity with which they are spread out spatially, and the computational limits it involves (as $\sclr{S}$ increases).
Moreover, as pointed out in \cite[Appendix A]{crane_geodesics_2013}, low values for the parameter $\varepsilon$ yield a distance that is closer to the graph distance (number of edges), than to the geodesic distance.
This means that as $\varepsilon$ decreases, the edge weights have diminishing influence.

Although we managed to develop a tractable framework, as compared for instance to using a dense storage of the cost matrix, this algorithm remains computationally expensive for histograms with more than 10 000 points (see \autoref{tab:timings}).

\section{Conclusion}
\label{sec:conclusion}

We have proposed a new method to learn the ground metric of optimal transport, as a geodesic distance on the graph supporting the data.
We learn from observations of a mass displacement and aim to reconstruct them using displacement interpolations.
We were able to turn a challenging task in terms of time and memory complexity into a tractable framework, using diffusion-based distance computations, regularized Wasserstein barycenters, and automatic differentiation.
We demonstrated our method on toy examples, as well as on a color transfer application, where we learn the evolution of colors during a sunset, and use it to create a new sunset sequence.
We finally discussed the limitations of the proposed method: our parametrization of the metric might be too simple, which limits the precision of the geodesic distances approximation. 
In turn, this impacts the interpolation, and adds trade-offs between having sharp interpolants, equally-spaced interpolants and the computational effort required for these.

\paragraph{Future work.}

For regular domains such as images and surfaces, it is possible to use a more precise approximation of a Riemannian metric as a field of tensors instead of a graph, as done for instance in~\cite{mirebeau_automatic_2017}, which in turn can be combined with triangulated meshes.
Multi-resolution strategies can also be integrated into our pipeline to accelerate the linear system resolution and Sinkhorn algorithm (as proposed by \cite{gerber_multiscale_2017}).
\new{
In this paper, we conducted only preliminary experiments for learning from multiple sequences.
The results seem promising, and we could imagine replacing our quasi-Newton optimizer by a stochastic gradient descent to potentially accelerate the learning.
Unbalanced optimal transport \cite{chizat_scaling_2016} could also be valuable to account for mass creation and elimination during transport, which is crucial for some applications in chemistry or biology.
}

\begin{acknowledgements}
This work was supported by the French National Research Agency (ANR) through the ROOT research grant (ANR-16-CE23-0009).
The work of G. Peyr\'e was supported by the European Research Council (ERC project NORIA) and by the French government under management of Agence Nationale de la Recherche as part of the ``Investissements d'avenir'' program, reference ANR19-P3IA-0001 (PRAIRIE 3IA Institute).
\end{acknowledgements}

\bibliographystyle{spmpscinat}
\bibliography{OTML}

\begin{thebibliography}{57}
\providecommand{\natexlab}[1]{#1}
\providecommand{\url}[1]{#1}
\providecommand{\urlprefix}{URL }
\expandafter\ifx\csname urlstyle\endcsname\relax
  \providecommand{\doi}[1]{DOI~\discretionary{}{}{}#1}\else
  \providecommand{\doi}{DOI~\discretionary{}{}{}\begingroup
  \urlstyle{rm}\Url}\fi

\bibitem[{Agueh and Carlier(2011)}]{agueh_barycenters_2011}
Agueh, M., Carlier, G.: Barycenters in the {{Wasserstein Space}}.
\newblock SIAM Journal on Mathematical Analysis \textbf{43}(2), 904--924
  (2011).
\newblock \doi{10.1137/100805741}

\bibitem[{Altschuler et~al.(2018)Altschuler, Bach, Rudi, and
  Weed}]{altschuler_massively_2018}
Altschuler, J., Bach, F., Rudi, A., Weed, J.: Massively scalable {{Sinkhorn}}
  distances via the {{Nystr\"om}} method.
\newblock arXiv:1812.05189 [cs, math, stat]  (2018)

\bibitem[{Altschuler et~al.(2017)Altschuler, Weed, and
  Rigollet}]{altschuler_near-linear_2017}
Altschuler, J., Weed, J., Rigollet, P.: Near-linear time approximation
  algorithms for optimal transport via {{Sinkhorn}} iteration.
\newblock arXiv preprint arXiv:1705.09634  (2017)

\bibitem[{Angenent et~al.(2003)Angenent, Haker, and
  Tannenbaum}]{angenent_minimizing_2003}
Angenent, S., Haker, S., Tannenbaum, A.: Minimizing {{Flows}} for the
  {{Monge}}-{{Kantorovich Problem}}.
\newblock SIAM Journal on Mathematical Analysis \textbf{35}(1), 61--97 (2003).
\newblock \doi{10.1137/S0036141002410927}

\bibitem[{Bellet et~al.(2015)Bellet, Habrard, and Sebban}]{bellet_metric_2015}
Bellet, A., Habrard, A., Sebban, M.: Metric Learning.
\newblock Synthesis Digital Library of Engineering and Computer Science. {San
  Rafael, California (1537 Fourth Street, San Rafael, CA 94901 USA): Morgan \&
  Claypool} (2015)

\bibitem[{Benamou et~al.(2015)Benamou, Carlier, Cuturi, Nenna, and
  Peyr{\'e}}]{benamou_iterative_2015}
Benamou, J.D., Carlier, G., Cuturi, M., Nenna, L., Peyr{\'e}, G.: Iterative
  {{Bregman}} projections for regularized transportation problems.
\newblock SIAM Journal on Scientific Computing \textbf{37}(2), A1111--A1138
  (2015)

\bibitem[{Benmansour et~al.(2010)Benmansour, Carlier, Peyr{\'e}, and
  Santambrogio}]{benmansour_derivatives_2010}
Benmansour, F., Carlier, G., Peyr{\'e}, G., Santambrogio, F.: Derivatives with
  {{Respect}} to {{Metrics}} and {{Applications}}: {{Subgradient Marching
  Algorithm}}.
\newblock Numerische Mathematik \textbf{116}(3), 357--381 (2010).
\newblock \doi{10.1007/s00211-010-0305-8}

\bibitem[{Bonneel et~al.(2016)Bonneel, Peyr{\'e}, and
  Cuturi}]{bonneel_wasserstein_2016}
Bonneel, N., Peyr{\'e}, G., Cuturi, M.: Wasserstein barycentric coordinates:
  Histogram regression using optimal transport.
\newblock ACM Transactions on Graphics \textbf{35}(4), 1--10 (2016).
\newblock \doi{10.1145/2897824.2925918}

\bibitem[{Brickell et~al.(2008)Brickell, Dhillon, Sra, and
  Tropp}]{brickell_metric_2008}
Brickell, J., Dhillon, I.S., Sra, S., Tropp, J.A.: The {{Metric Nearness
  Problem}}.
\newblock SIAM Journal on Matrix Analysis and Applications \textbf{30}(1),
  375--396 (2008).
\newblock \doi{10.1137/060653391}

\bibitem[{Buttazzo et~al.(2004)Buttazzo, Davini, Fragal{\`a}, and
  Maci{\`a}}]{buttazzo_optimal_2004}
Buttazzo, G., Davini, A., Fragal{\`a}, I., Maci{\`a}, F.: Optimal
  {{Riemannian}} distances preventing mass transfer.
\newblock Journal f\"ur die reine und angewandte Mathematik (Crelles Journal)
  \textbf{2004}(575) (2004).
\newblock \doi{10.1515/crll.2004.077}

\bibitem[{Chechik et~al.(2009)Chechik, Shalit, Sharma, and
  Bengio}]{chechik_online_2009}
Chechik, G., Shalit, U., Sharma, V., Bengio, S.: An {{Online Algorithm}} for
  {{Large Scale Image Similarity Learning}}.
\newblock Advances in Neural Information Processing Systems p.~9 (2009)

\bibitem[{Chizat et~al.(2016)Chizat, Peyr{\'e}, Schmitzer, and
  Vialard}]{chizat_scaling_2016}
Chizat, L., Peyr{\'e}, G., Schmitzer, B., Vialard, F.X.: Scaling algorithms for
  unbalanced transport problems.
\newblock arXiv preprint arXiv:1607.05816  (2016)

\bibitem[{Chopra et~al.(2005)Chopra, Hadsell, and LeCun}]{chopra_learning_2005}
Chopra, S., Hadsell, R., LeCun, Y.: Learning a {{Similarity Metric
  Discriminatively}}, with {{Application}} to {{Face Verification}}.
\newblock In: 2005 {{IEEE Computer Society Conference}} on {{Computer Vision}}
  and {{Pattern Recognition}} ({{CVPR}}'05), vol.~1, pp. 539--546. {IEEE}, {San
  Diego, CA, USA} (2005).
\newblock \doi{10.1109/CVPR.2005.202}

\bibitem[{Courty et~al.(2016)Courty, Flamary, Tuia, and
  Rakotomamonjy}]{courty_optimal_2016}
Courty, N., Flamary, R., Tuia, D., Rakotomamonjy, A.: Optimal {{Transport}} for
  {{Domain Adaptation}}.
\newblock IEEE Transactions on Pattern Analysis and Machine Intelligence
  \textbf{39}(9), 1853--1865 (2016).
\newblock \doi{10.1109/TPAMI.2016.2615921}

\bibitem[{Crane et~al.(2013)Crane, Weischedel, and
  Wardetzky}]{crane_geodesics_2013}
Crane, K., Weischedel, C., Wardetzky, M.: Geodesics in heat: {{A}} new approach
  to computing distance based on heat flow.
\newblock ACM Transactions on Graphics (TOG) \textbf{32}(5), 152 (2013)

\bibitem[{Cuturi(2013)}]{cuturi_sinkhorn_2013}
Cuturi, M.: Sinkhorn distances: {{Lightspeed}} computation of optimal
  transport.
\newblock In: Advances in {{Neural Information Processing Systems}}, pp.
  2292--2300 (2013)

\bibitem[{Cuturi and Avis(2014)}]{cuturi_ground_2014}
Cuturi, M., Avis, D.: Ground metric learning.
\newblock Journal of Machine Learning Research \textbf{15}(1), 533--564 (2014)

\bibitem[{Cuturi and Doucet(2014)}]{cuturi_fast_2014}
Cuturi, M., Doucet, A.: Fast computation of {{Wasserstein}} barycenters.
\newblock In: International {{Conference}} on {{Machine Learning}}, pp.
  685--693 (2014)

\bibitem[{Dognin et~al.(2019)Dognin, Melnyk, Mroueh, Ross, Santos, and
  Sercu}]{dognin_wasserstein_2019}
Dognin, P., Melnyk, I., Mroueh, Y., Ross, J., Santos, C.D., Sercu, T.:
  Wasserstein {{Barycenter Model Ensembling}}.
\newblock arXiv:1902.04999 [cs, stat]  (2019)

\bibitem[{Dupuy et~al.(2016)Dupuy, Galichon, and Sun}]{dupuy_estimating_2016}
Dupuy, A., Galichon, A., Sun, Y.: Estimating matching affinity matrix under
  low-rank constraints.
\newblock arXiv:1612.09585 [stat]  (2016)

\bibitem[{Dvurechensky et~al.(2018)Dvurechensky, Gasnikov, and
  Kroshnin}]{dvurechensky_computational_2018}
Dvurechensky, P., Gasnikov, A., Kroshnin, A.: Computational {{Optimal
  Transport}}: {{Complexity}} by {{Accelerated Gradient Descent Is Better
  Than}} by {{Sinkhorn}}'s {{Algorithm}}.
\newblock arXiv:1802.04367 [cs, math]  (2018)

\bibitem[{Frogner et~al.(2015)Frogner, Zhang, Mobahi, Araya, and
  Poggio}]{frogner_learning_2015}
Frogner, C., Zhang, C., Mobahi, H., Araya, M., Poggio, T.A.: Learning with a
  {{Wasserstein Loss}}.
\newblock Advances in Neural Information Processing Systems p.~9 (2015)

\bibitem[{Genevay et~al.(2017)Genevay, Peyr{\'e}, and
  Cuturi}]{genevay_learning_2017}
Genevay, A., Peyr{\'e}, G., Cuturi, M.: Learning {{Generative Models}} with
  {{Sinkhorn Divergences}}.
\newblock arXiv:1706.00292 [stat]  (2017)

\bibitem[{Gerber and Maggioni(2017)}]{gerber_multiscale_2017}
Gerber, S., Maggioni, M.: Multiscale strategies for computing optimal
  transport.
\newblock arXiv preprint arXiv:1708.02469  (2017)

\bibitem[{Griewank(2012)}]{griewank_who_2012}
Griewank, A.: Who {{Invented}} the {{Reverse Mode}} of {{Differentiation}}?
\newblock Documenta Mathematica p.~12 (2012)

\bibitem[{Griewank and Walther(2008)}]{griewank_evaluating_2008}
Griewank, A., Walther, A.: Evaluating {{Derivatives}}.
\newblock Other {{Titles}} in {{Applied Mathematics}}. {Society for Industrial
  and Applied Mathematics} (2008).
\newblock \doi{10.1137/1.9780898717761}

\bibitem[{Huang et~al.(2016)Huang, Guo, Kusner, Sun, Sha, and
  Weinberger}]{huang_supervised_2016}
Huang, G., Guo, C., Kusner, M.J., Sun, Y., Sha, F., Weinberger, K.Q.:
  Supervised word mover's distance.
\newblock In: Advances in {{Neural Information Processing Systems}}, pp.
  4862--4870 (2016)

\bibitem[{Kedem et~al.(2012)Kedem, Tyree, Sha, Lanckriet, and
  Weinberger}]{kedem_non-linear_2012}
Kedem, D., Tyree, S., Sha, F., Lanckriet, G.R., Weinberger, K.Q.: Non-linear
  {{Metric Learning}}.
\newblock Neural Information Processing Systems (NIPS) p.~9 (2012)

\bibitem[{Kulis(2013)}]{kulis_metric_2013}
Kulis, B.: Metric {{Learning}}: {{A Survey}}.
\newblock Foundations and Trends\textregistered{} in Machine Learning
  \textbf{5}(4), 287--364 (2013).
\newblock \doi{10.1561/2200000019}

\bibitem[{L{\'e}vy(2015)}]{levy_numerical_2015}
L{\'e}vy, B.: A {{Numerical Algorithm}} for {{L2 Semi}}-{{Discrete Optimal
  Transport}} in {{3D}}.
\newblock ESAIM: Mathematical Modelling and Numerical Analysis \textbf{49}(6),
  1693--1715 (2015).
\newblock \doi{10.1051/m2an/2015055}

\bibitem[{Li et~al.(2019)Li, Ye, Zhou, and Zha}]{li_learning_2019}
Li, R., Ye, X., Zhou, H., Zha, H.: Learning to {{Match}} via {{Inverse Optimal
  Transport}}.
\newblock Journal of Machine Learning Research p.~37 (2019)

\bibitem[{MacAdam(1942)}]{macadam_visual_1942}
MacAdam, D.L.: Visual {{Sensitivities}} to {{Color Differences}} in
  {{Daylight}}.
\newblock Journal of the Optical Society of America \textbf{32}(5), 247 (1942).
\newblock \doi{10.1364/JOSA.32.000247}

\bibitem[{McCann(1997)}]{mccann_convexity_1997}
McCann, R.J.: A {{Convexity Principle}} for {{Interacting Gases}}.
\newblock Advances in Mathematics \textbf{128}(1), 153--179 (1997).
\newblock \doi{10.1006/aima.1997.1634}

\bibitem[{Mirebeau and Dreo(2017)}]{mirebeau_automatic_2017}
Mirebeau, J.M., Dreo, J.: Automatic differentiation of non-holonomic fast
  marching for computing most threatening trajectories under sensors
  surveillance.
\newblock arXiv:1704.03782 [math]  (2017)

\bibitem[{Papadakis et~al.(2014)Papadakis, Peyr{\'e}, and
  Oudet}]{papadakis_optimal_2014}
Papadakis, N., Peyr{\'e}, G., Oudet, E.: Optimal {{Transport}} with {{Proximal
  Splitting}}.
\newblock SIAM Journal on Imaging Sciences \textbf{7}(1), 212--238 (2014).
\newblock \doi{10.1137/130920058}

\bibitem[{Paszke et~al.(2017)Paszke, Gross, Chintala, Chanan, Yang, DeVito,
  Lin, Desmaison, Antiga, and Lerer}]{paszke_automatic_2017}
Paszke, A., Gross, S., Chintala, S., Chanan, G., Yang, E., DeVito, Z., Lin, Z.,
  Desmaison, A., Antiga, L., Lerer, A.: Automatic differentiation in
  {{PyTorch}} p.~4 (2017)

\bibitem[{Pele and {Ben-Aliz}(2016)}]{pele_interpolated_2016}
Pele, O., {Ben-Aliz}, Y.: Interpolated {{Discretized Embedding}} of {{Single
  Vectors}} and {{Vector Pairs}} for {{Classification}}, {{Metric Learning}}
  and {{Distance Approximation}}.
\newblock arXiv:1608.02484 [cs]  (2016)

\bibitem[{Peyr{\'e} and Cuturi(2018)}]{peyre_computational_2018}
Peyr{\'e}, G., Cuturi, M.: Computational {{Optimal Transport}}.
\newblock {Now Publishers, Inc.} (2018)

\bibitem[{Rubner et~al.(2000)Rubner, Tomasi, and Guibas}]{rubner_earth_2000}
Rubner, Y., Tomasi, C., Guibas, L.J.: The {{Earth Mover}}'s {{Distance}} as a
  {{Metric}} for {{Image Retrieval}}.
\newblock International Journal of Computer Vision p.~23 (2000)

\bibitem[{Sandler and Lindenbaum(2011)}]{sandler_nonnegative_2011}
Sandler, R., Lindenbaum, M.: Nonnegative {{Matrix Factorization}} with {{Earth
  Mover}}'s {{Distance Metric}} for {{Image Analysis}}.
\newblock IEEE Transactions on Pattern Analysis and Machine Intelligence
  \textbf{33}(8), 1590--1602 (2011).
\newblock \doi{10.1109/TPAMI.2011.18}

\bibitem[{Santambrogio(2015)}]{santambrogio_optimal_2015}
Santambrogio, F.: Optimal Transport for Applied Mathematicians.
\newblock {Springer} (2015)

\bibitem[{Schiebinger et~al.(2019)Schiebinger, Shu, Tabaka, Cleary,
  Subramanian, Solomon, Gould, Liu, Lin, Berube, Lee, Chen, Brumbaugh,
  Rigollet, Hochedlinger, Jaenisch, Regev, and
  Lander}]{schiebinger_optimal-transport_2019}
Schiebinger, G., Shu, J., Tabaka, M., Cleary, B., Subramanian, V., Solomon, A.,
  Gould, J., Liu, S., Lin, S., Berube, P., Lee, L., Chen, J., Brumbaugh, J.,
  Rigollet, P., Hochedlinger, K., Jaenisch, R., Regev, A., Lander, E.S.:
  Optimal-{{Transport Analysis}} of {{Single}}-{{Cell Gene Expression
  Identifies Developmental Trajectories}} in {{Reprogramming}}.
\newblock Cell \textbf{176}(4), 928--943.e22 (2019).
\newblock \doi{10.1016/j.cell.2019.01.006}

\bibitem[{Schmitz et~al.(2018)Schmitz, Heitz, Bonneel, Ngol{\`e}~Mboula,
  Coeurjolly, Cuturi, Peyr{\'e}, and Starck}]{schmitz_wasserstein_2018}
Schmitz, M.A., Heitz, M., Bonneel, N., Ngol{\`e}~Mboula, F.M., Coeurjolly, D.,
  Cuturi, M., Peyr{\'e}, G., Starck, J.L.: Wasserstein {{Dictionary Learning}}:
  {{Optimal Transport}}-based unsupervised non-linear dictionary learning.
\newblock SIAM Journal on Imaging Sciences  (2018)

\bibitem[{Simou and Frossard(2018)}]{simou_graph_2018}
Simou, E., Frossard, P.: Graph {{Signal Representation}} with {{Wasserstein
  Barycenters}}.
\newblock arXiv:1812.05517 [eess]  (2018)

\bibitem[{Solomon et~al.(2015)Solomon, {de Goes}, Peyr{\'e}, Cuturi, Butscher,
  Nguyen, Du, and Guibas}]{solomon_convolutional_2015}
Solomon, J., {de Goes}, F., Peyr{\'e}, G., Cuturi, M., Butscher, A., Nguyen,
  A., Du, T., Guibas, L.: Convolutional {{Wasserstein Distances}}: {{Efficient
  Optimal Transportation}} on {{Geometric Domains}}.
\newblock ACM Trans. Graph. \textbf{34}(4), 66:1--66:11 (2015).
\newblock \doi{10.1145/2766963}

\bibitem[{Stuart and Wolfram(2019)}]{stuart_inverse_2019}
Stuart, A.M., Wolfram, M.T.: Inverse optimal transport.
\newblock arXiv:1905.03950 [math, stat]  (2019)

\bibitem[{Torresani and Lee(2007)}]{torresani_large_2007}
Torresani, L., Lee, K.c.: Large {{Margin Component Analysis}}.
\newblock Advances in neural information processing systems p.~8 (2007)

\bibitem[{Varadhan(1967)}]{varadhan_behavior_1967}
Varadhan, S.R.S.: On the behavior of the fundamental solution of the heat
  equation with variable coefficients.
\newblock Communications on Pure and Applied Mathematics \textbf{20}(2),
  431--455 (1967)

\bibitem[{Wang and Guibas(2012)}]{wang_supervised_2012}
Wang, F., Guibas, L.J.: Supervised {{Earth Mover}}'s {{Distance Learning}} and
  {{Its Computer Vision Applications}}.
\newblock In: Computer {{Vision}} \textendash{} {{ECCV}} 2012, vol. 7572, pp.
  442--455. {Springer Berlin Heidelberg}, {Berlin, Heidelberg} (2012).
\newblock \doi{10.1007/978-3-642-33718-5_32}

\bibitem[{Wang et~al.(2011)Wang, Do, Woznica, and Kalousis}]{wang_metric_2011}
Wang, J., Do, H.T., Woznica, A., Kalousis, A.: Metric {{Learning}} with
  {{Multiple Kernels}}.
\newblock Advances in Neural Information Processing Systems p.~9 (2011)

\bibitem[{Weinberger et~al.(2006)Weinberger, Blitzer, and
  Saul}]{weinberger_distance_2006}
Weinberger, K.Q., Blitzer, J., Saul, L.K.: Distance {{Metric Learning}} for
  {{Large Margin Nearest Neighbor Classification}}.
\newblock Advances in neural information processing systems p.~8 (2006)

\bibitem[{Weinberger and Saul(2008)}]{weinberger_fast_2008}
Weinberger, K.Q., Saul, L.K.: Fast solvers and efficient implementations for
  distance metric learning.
\newblock In: Proceedings of the 25th International Conference on {{Machine}}
  Learning - {{ICML}} '08, pp. 1160--1167. {ACM Press}, {Helsinki, Finland}
  (2008).
\newblock \doi{10.1145/1390156.1390302}

\bibitem[{Xing et~al.(2003)Xing, Jordan, Russell, and Ng}]{xing_distance_2003}
Xing, E.P., Jordan, M.I., Russell, S.J., Ng, A.Y.: Distance {{Metric Learning}}
  with {{Application}} to {{Clustering}} with {{Side}}-{{Information}}.
\newblock In: Advances in Neural Information Processing Systems, p.~8 (2003)

\bibitem[{Xu et~al.(2018)Xu, Luo, Deng, and Huang}]{xu_multi-level_2018}
Xu, J., Luo, L., Deng, C., Huang, H.: Multi-{{Level Metric Learning}} via
  {{Smoothed Wasserstein Distance}}.
\newblock In: Proceedings of the {{Twenty}}-{{Seventh International Joint
  Conference}} on {{Artificial Intelligence}}, pp. 2919--2925. {International
  Joint Conferences on Artificial Intelligence Organization}, {Stockholm,
  Sweden} (2018).
\newblock \doi{10.24963/ijcai.2018/405}

\bibitem[{Yang and Cohen(2016)}]{yang_geodesic_2016}
Yang, F., Cohen, L.D.: Geodesic {{Distance}} and {{Curves Through Isotropic}}
  and {{Anisotropic Heat Equations}} on {{Images}} and {{Surfaces}}.
\newblock Journal of Mathematical Imaging and Vision \textbf{55}(2), 210--228
  (2016).
\newblock \doi{10.1007/s10851-015-0621-9}

\bibitem[{Yang et~al.(2015)Yang, Xu, Chen, Zheng, and
  Liu}]{yang_chi-squared_2015}
Yang, W., Xu, L., Chen, X., Zheng, F., Liu, Y.: Chi-{{Squared Distance Metric
  Learning}} for {{Histogram Data}}.
\newblock Mathematical Problems in Engineering \textbf{2015}, 1--12 (2015).
\newblock \doi{10.1155/2015/352849}

\bibitem[{Zen et~al.(2014)Zen, Ricci, and Sebe}]{zen_simultaneous_2014}
Zen, G., Ricci, E., Sebe, N.: Simultaneous {{Ground Metric Learning}} and
  {{Matrix Factorization}} with {{Earth Mover}}'s {{Distance}}.
\newblock In: 2014 22nd {{International Conference}} on {{Pattern
  Recognition}}, pp. 3690--3695 (2014).
\newblock \doi{10.1109/ICPR.2014.634}

\end{thebibliography}

\appendix

\section{Elements of proof for Proposition 1}
\label{apx:kernelgrad}

The mapping $\phi_1 : \vec{w} \in \mathbb{R}^\sclr{K} \to \mat{M} \in \mathbb{R}^{\sclr{N}^2}$ admits as adjoint Jacobian:
\begin{equation}
	\left[\partial \phi_1(\vec{w})\right]^T(\mat{X})_{i,j} = -\frac{\varepsilon}{4\sclr{S}}\left(\mat{X}_{i,j}+\mat{X}_{j,i}-\mat{X}_{i,i}-\mat{X}_{j,j}\right).
\end{equation}

The mapping $\phi_2 : \mat{M} \in \mathbb{R}^{\sclr{N}^2} \to \mat{U}=\mat{M}^{-1}\in \mathbb{R}^{\sclr{N}^2}$ admits as adjoint Jacobian:
\begin{equation}
\left[\partial \phi_2(\mat{M})\right]^T(\mat{X}) = -\mat{M}^{-1}\mat{X}\mat{M}^{-1}.
\end{equation}

The mapping $\phi_3 : \mat{U} \in \mathbb{R}^{\sclr{N}^2} \to \mat{V}=\mat{U}^{\sclr{S}} \in \mathbb{R}^{\sclr{N}^2}$ admits as adjoint Jacobian:
\begin{equation}
\left[\partial \phi_3(\mat{U})\right]^T(\mat{X}) = \sum_{l=0}^{\sclr{S}-1}\mat{U}^{l}\mat{X}\mat{U}^{S-l-1}.
\end{equation}

The mapping $\phi_4 : \mat{V} \in \mathbb{R}^{\sclr{N}^2} \to \vec{y}=\mat{V}\vec{v} \in \mathbb{R}^{\sclr{N}}$ admits as adjoint Jacobian:
\begin{equation}
\left[\partial \phi_4(\mat{V})\right]^T(\vec{x}) = \vec{x}\vec{v}^T.
\end{equation}

Since $\Phi = \phi_4 \circ \phi_3 \circ \phi_2 \circ \phi_1$, we compose the adjoint Jacobians in the reverse order as follows:
\begin{equation}
	\left[\partial \Phi(\vec{w})\right]^T(\mat{g})_{i,j} = \left[\partial \phi_1\right]^T\left[\partial \phi_2\right]^T\left[\partial \phi_3\right]^T\left[\partial \phi_4\right]^T(\vec{g})_{i,j},
\end{equation}
to finally obtain:
\begin{align}
&[\partial \Phi(\vec{w})]^T(\vec{g})_{i,j}
= 
\frac{\varepsilon}{4\sclr{S}}\sum_{\ell=0}^{\sclr{S}-1} \left(\vec{g}_i^\ell - \vec{g}_j^\ell \right) \left(\vec{v}_i^\ell - \vec{v}_j^\ell \right), \\
&\text{where}\quad
\left\{
\begin{array}{l}
\vec{g}^\ell \eqdef \mat{M}^{\ell-\sclr{S}}\vec{g} \\
\vec{v}^\ell \eqdef \mat{M}^{-\ell-1} \vec{v}.
\end{array}
\right.
\end{align}

\end{document}